\newtheorem{theorem}{Theorem}
\newtheorem{lemma}{Lemma}
\newtheorem{problem}{Problem}
\newtheorem{proposition}{Proposition}
\newtheorem{definition}{Definition}
\newtheorem{remark}{Remark}
\newcommand{\halmos}{\rule[-0.4mm]{2mm}{2mm}}
\newenvironment{proof}{\par\noindent{\bf Proof\ }}{\hfill\halmos\\[2mm]}
\def\@begintheorem#1#2{\tmpitemindent\itemindent\topsep 0pt\rm\trivlist
    \item[\hskip \labelsep{\indent\it #1\ #2: foo}]\itemindent\tmpitemindent}
\def\@opargbegintheorem#1#2#3{\tmpitemindent\itemindent\topsep 0pt\rm \trivlist
    \item[\hskip\labelsep{\indent\it #1\ #2\ foofoo \rm(#3)}]\itemindent\tmpitemindent}
\def\@endtheorem{\endtrivlist\unskip}
\newcommand{\ps}[1]{ \langle{{#1}}\rangle }
\newcommand{\ieg}{\left[\hspace{-0.9ex}\left[\hspace{0.5ex}}
\newcommand{\ied}{\hspace{0.5ex} \right]\hspace{-0.9ex}\right]}
\thanks{UMR~7503-CNRS}}
\begin{document}
\makeRR

\section{Introduction}

Using a support vector machine (SVM) \cite{BosGuyVap92,CorVap95}
requires to set two types of
hyperparameters: the soft margin parameter $C$ and the parameters
of the kernel. To perform this model selection task, several
approaches are available (see for instance \cite{HasTibFri01,Mas03}).
The solution of choice
consists in applying a cross-validation procedure.
Among those procedures,
the leave-one-out one appears especially attractive, since
it is known to produce an estimator
of the generalization error which is almost unbiased \cite{LunBra69}.
The seamy side of things is that it is highly time consuming.
This is the reason why, in
recent years, a number of upper bounds on the leave-one-out error
of pattern recognition SVMs have been proposed in literature
(see \cite{ChaVapBouMuk02} for a survey).
Among those bounds, the tightest one is the span bound \cite{VapCha00}.
However, the results of Chapelle and co-workers presented in
\cite{ChaVapBouMuk02} show that another bound, the radius-margin one
\cite{Vap98}, achieves equivalent performance for model
selection while being far simpler to compute. This is the reason why
it is currently the most popular bound.
It applies to the hard margin
machine and, by extension, to the $2$-norm SVM
(see for instance Chapter~7 in \cite{ShaCri04}).

In this report, a multi-class extension of the $2$-norm SVM
is introduced. This machine, named $\text{M-SVM}^2$, 
is a quadratic loss multi-class SVM, i.e., a multi-class SVM (M-SVM)
in which the $\ell_1$-norm on the vector of slack variables has been
replaced with a quadratic form. The standard M-SVM on which it is based
is the one of Lee, Lin and Wahba \cite{LeeLinWah04}. As the $2$-norm SVM,
its training algorithm is equivalent to the training algorithm
of a hard margin machine obtained by a simple change of kernel.
We then establish a generalized radius-margin bound on the leave-one-out
error of the hard margin version of the M-SVM of Lee, Lin and Wahba.

The organization of this paper is as follows.
Section~\ref{sec:M-SVM} presents the multi-class SVMs, by
describing their common architecture and the general form
taken by their different training algorithms. It focuses
on the M-SVM of Lee, Lin and Wahba.
In Section~\ref{sec:SUR}, the $\text{M-SVM}^2$ is introduced as a particular case
of quadratic loss M-SVM.
Its connection with the hard margin version of the M-SVM of Lee, Lin and Wahba
is highlighted, as well as the fact that it constitutes a multi-class
generalization of the $2$-norm SVM.
Section~\ref{sec:RM_bound} is devoted to the formulation
and proof of the corresponding multi-class radius-margin bound.
At last, we draw conclusions and outline our ongoing research 
in Section~\ref{sec:conclusion}.

\newpage{}


\section{Multi-Class SVMs}

\label{sec:M-SVM}

\subsection{Formalization of the learning problem}
\label{subsec:formalization}

We are interested here in multi-class pattern recognition
problems. Formally, we consider the case of $Q$-category
classification problems with $3 \leq Q < \infty$, but our results
extend to the case of dichotomies. Each
object is represented by its description $x \in {\cal X}$ and the
set ${\cal Y}$ of the categories $y$ can be identified with the
set of indexes of the categories: $\ieg 1, Q \ied$.
We assume that the link between
objects and categories can be described by an unknown probability
measure $P$ on the product space ${\cal X} \times {\cal Y}$.
The aim of the learning problem consists in selecting in
a set ${\cal G}$ of functions $g = \left ( g_{k} \right )_{1 \leq k \leq Q}$
from ${\cal X}$ into $\mathbb{R}^{Q}$
a function classifying data in an optimal way. The criterion of optimality
must be specified.
The function $g$ assigns $x \in {\cal X}$ to the category $l$ if and only
if $g_l(x) > \max_{k \neq l} g_k(x)$. In case of ex \ae quo, $x$ is assigned
to a dummy category denoted by $*$.
Let $f$ be the decision function (from ${\cal X}$
into ${\cal Y} \bigcup \left \{ * \right \}$)  associated with $g$.
With these definitions at hand,
the objective function to be minimized is the probability of error
$P \left ( f \left ( X \right ) \neq Y \right )$. The optimization process,
called {\em training}, is based on empirical data. More precisely, we assume
that there exists a random pair 
$\left ( X,Y \right ) \in {\cal X} \times {\cal Y}$, distributed according to $P$,
and we are provided with a $m$-sample
$D_m = \left( \left ( X_i, Y_ i \right) \right)_{1\leq i\leq m}$
of independent copies of $\left ( X, Y \right )$.

There are two questions raised by such problems: how to properly
choose the class of functions ${\cal G}$ and how to determine the
best candidate $g^*$ in this class, using only $D_m$. This report addresses
the first question, named {\em model selection}, in the particular case
when the model considered is a M-SVM. The second question,
named {\em function selection}, is addressed for instance in \cite{Gue07b}.

\subsection{Architecture and training algorithms}
\label{sec:architecture}

M-SVMs, like all the SVMs, belong to the family of kernel machines.
As such, they operate on a class of functions induced by a 
positive semidefinite (Mercer) kernel. This calls for the formulation
of some definitions and propositions.

\begin{definition}[Positive semidefinite kernel]
A {\em positive semidefinite kernel} $\kappa$ on the set ${\cal X}$
is a continuous and symmetric function
$\kappa: {\cal X}^2 \to \mathbb{R}$ verifying:
$$
\forall n \in \mathbb{N}^*, \;
\forall \left ( x_i \right )_{1 \leq i \leq n} \in {\cal X}^n, \;
\forall \left ( a_i \right )_{1 \leq i \leq n} \in \mathbb{R}^n, \;
\sum_{i=1}^n \sum_{j=1}^n
a_i a_j \kappa \left ( x_i, x_j \right) \geq 0.
$$
\end{definition}

\begin{definition}[Reproducing kernel Hilbert space \cite{BerTho04}]
Let $\left ( \mathbf{H}, \ps{\cdot,\cdot}_{\mathbf{H}} \right)$
be a Hilbert space of functions
on ${\cal X}$ (${\mathbf{H} \subset \mathbb{R}^{{\cal X}}}$).
A function $\kappa: {\cal X}^2 \to \mathbb{R}$ is a {\em reproducing kernel}
of $\mathbf{H}$ if and only if:
\begin{enumerate}
\item $\forall x \in {\cal X}, \;
\kappa_x = \kappa \left (x, \cdot \right ) \in \mathbf{H}$;
\item $\forall x \in {\cal X}, \forall h \in \mathbf{H}, \;
\ps{h, \kappa_x}_\mathbf{H} = h(x)$ (reproducing property).
\end{enumerate}
A Hilbert space of functions which possesses a reproducing kernel 
is called a {\em reproducing kernel Hilbert space} (RKHS).
\end{definition}

\begin{proposition}
Let $\left ( \mathbf{H}_{\kappa}, \ps{\cdot,\cdot}_{\mathbf{H}_{\kappa}} \right)$
be a RKHS of functions on ${\cal X}$ with reproducing kernel $\kappa$.
Then, there exists a map $\Phi$ from ${\cal X}$ into a Hilbert space
$\left ( E_{\Phi \left( {\cal X} \right )}, \ps{\cdot,\cdot} \right )$ such that:
\begin{equation}
\label{eq:kernel_trick}
\forall \left ( x, x' \right ) \in {\cal X}^2, \;
\kappa \left ( x, x' \right ) = \ps{ \Phi \left ( x \right ),
\Phi \left ( x' \right )}.
\end{equation}
$\Phi$ is called a {\em feature map} and $E_{\Phi \left( {\cal X} \right )}$
a {\em feature space}.
\end{proposition}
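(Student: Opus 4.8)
The plan is to exhibit the RKHS itself as a feature space and the canonical map $x \mapsto \kappa_x$ as a feature map. First I would take $E_{\Phi \left ( {\cal X} \right )} = \mathbf{H}_{\kappa}$, equipped with its own inner product $\ps{\cdot,\cdot}_{\mathbf{H}_{\kappa}}$, and define $\Phi : {\cal X} \to \mathbf{H}_{\kappa}$ by $\Phi \left ( x \right ) = \kappa_x = \kappa \left ( x, \cdot \right )$. Property~1 of the RKHS definition guarantees that $\kappa_x \in \mathbf{H}_{\kappa}$ for every $x \in {\cal X}$, so $\Phi$ is well-defined and indeed lands in the claimed Hilbert space.

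Next I would verify \eqref{eq:kernel_trick} by a direct application of the reproducing property. Fixing $\left ( x, x' \right ) \in {\cal X}^2$ and using the reproducing property with $h = \kappa_{x'} \in \mathbf{H}_{\kappa}$, one obtains $\ps{\kappa_{x'}, \kappa_x}_{\mathbf{H}_{\kappa}} = \kappa_{x'} \left ( x \right ) = \kappa \left ( x', x \right )$. The symmetry of $\kappa$ then yields $\ps{\Phi \left ( x \right ), \Phi \left ( x' \right )}_{\mathbf{H}_{\kappa}} = \kappa \left ( x, x' \right )$, which is precisely the desired identity.

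The argument is essentially immediate once the reproducing property is invoked, so I do not expect a genuine obstacle here; the only two points demanding attention are the structural ingredients that make it go through, namely the membership $\kappa_x \in \mathbf{H}_{\kappa}$ (property~1), which ensures $\Phi$ takes values in the Hilbert space, and the symmetry of $\kappa$, which is needed to match the arguments in the final equality. It is worth stressing that the feature map produced this way is by no means unique: any other map into a Hilbert space whose inner product reproduces $\kappa$ would serve equally well (for instance, one arising from a Mercer-type eigendecomposition of $\kappa$), but the canonical construction above is the most economical and requires no hypotheses beyond those defining a RKHS.
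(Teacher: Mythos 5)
Your proof is correct: taking $E_{\Phi \left ( {\cal X} \right )} = \mathbf{H}_{\kappa}$ and $\Phi \left ( x \right ) = \kappa_x$ is the canonical construction, and the identity $\ps{\Phi \left ( x \right ), \Phi \left ( x' \right )} = \kappa \left ( x, x' \right )$ follows at once from the reproducing property (indeed, applying it with $h = \kappa_x$ gives $\ps{\kappa_x, \kappa_{x'}}_{\mathbf{H}_{\kappa}} = \kappa_x \left ( x' \right ) = \kappa \left ( x, x' \right )$ directly, so even the appeal to symmetry can be dispensed with). The paper states this proposition as standard background without supplying a proof, so there is nothing to compare against; your argument is exactly the one the authors implicitly rely on.
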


The connection between positive semidefinite kernels and RKHS is the following.

\begin{proposition}
If $\kappa$ is a positive semidefinite kernel on ${\cal X}$, then there exists a RKHS 
$\left ( \mathbf{H}, \ps{\cdot,\cdot}_{\mathbf{H}} \right)$ of functions on $\cal X$ such
that $\kappa$ is a reproducing kernel of $\mathbf{H}$.
\end{proposition}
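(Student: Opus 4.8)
The statement is the classical Moore--Aronszajn theorem, and the plan is to construct $\mathbf{H}$ explicitly from $\kappa$ by the standard completion procedure. First, for each $x \in {\cal X}$, introduce the function $\kappa_x = \kappa(x, \cdot)$ and let $\mathbf{H}_0$ denote the vector space of all finite linear combinations $\sum_{i=1}^n a_i \kappa_{x_i}$ with $a_i \in \mathbb{R}$ and $x_i \in {\cal X}$. On $\mathbf{H}_0$ I would define a candidate inner product by setting $\ps{\kappa_x, \kappa_{x'}} = \kappa(x, x')$ and extending it bilinearly, so that for $f = \sum_i a_i \kappa_{x_i}$ and $h = \sum_j b_j \kappa_{x'_j}$ one has $\ps{f, h} = \sum_{i,j} a_i b_j \kappa(x_i, x'_j)$.

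The next step is to verify that this bracket is genuinely an inner product. Bilinearity and symmetry are immediate from the symmetry of $\kappa$. Well-definedness (independence of the chosen representations of $f$ and $h$) follows by rewriting $\ps{f, h} = \sum_i a_i h(x_i) = \sum_j b_j f(x'_j)$, which expresses the value through the functions themselves rather than their coefficients. Positivity, $\ps{f, f} \geq 0$, is exactly the defining semidefiniteness inequality of $\kappa$ applied to the points $x_i$ and scalars $a_i$. To upgrade this to positive definiteness, I would first note that the reproducing identity $\ps{f, \kappa_x} = f(x)$ already holds on $\mathbf{H}_0$ by construction, and then invoke the Cauchy--Schwarz inequality (valid for any positive semidefinite bilinear form): $|f(x)|^2 = |\ps{f, \kappa_x}|^2 \leq \ps{f, f}\,\kappa(x, x)$. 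Hence $\ps{f, f} = 0$ forces $f(x) = 0$ at every $x$, that is $f = 0$, so $\ps{\cdot, \cdot}$ is an honest inner product on $\mathbf{H}_0$.

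Finally I would complete $\mathbf{H}_0$ with respect to the induced norm. The delicate point, and the one I expect to be the main obstacle, is that the abstract metric completion a priori yields equivalence classes of Cauchy sequences rather than actual functions on ${\cal X}$, so one must realize it as a space of functions and confirm that $\kappa$ still reproduces. The key estimate is once more Cauchy--Schwarz: for a Cauchy sequence $(f_n)$ in $\mathbf{H}_0$, $|f_n(x) - f_m(x)| = |\ps{f_n - f_m, \kappa_x}| \leq \|f_n - f_m\|\,\sqrt{\kappa(x,x)}$, so $(f_n)$ converges pointwise, and the pointwise limit can be assigned to the limit class; one then checks this assignment is consistent and injective, yielding a Hilbert space $\mathbf{H} \subset \mathbb{R}^{\cal X}$ that contains $\mathbf{H}_0$ as a dense subspace. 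Since each $\kappa_x$ lies in $\mathbf{H}_0 \subset \mathbf{H}$, and since the maps $f \mapsto \ps{f, \kappa_x}$ and $f \mapsto f(x)$ are both continuous on $\mathbf{H}$ and coincide on the dense subspace $\mathbf{H}_0$, they coincide on all of $\mathbf{H}$. This establishes both clauses of the definition of a reproducing kernel, which completes the argument.
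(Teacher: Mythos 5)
The paper states this proposition as background and gives no proof of it (it is the classical Moore--Aronszajn theorem, deferred to the RKHS literature it cites), so there is no in-paper argument to compare against. Your construction --- the pre-Hilbert space $\mathbf{H}_0$ spanned by the functions $\kappa_x$, well-definedness and positive definiteness of the bilinear form via the reproducing identity together with Cauchy--Schwarz, completion realized through pointwise limits, and extension of the reproducing property by density --- is the standard and correct proof, and you rightly single out the one genuinely delicate step, namely that the abstract completion must be identified injectively with a space of functions on ${\cal X}$; that identification is indeed injective because a limit class whose pointwise limit vanishes everywhere is orthogonal to every element of the dense subspace $\mathbf{H}_0$ and is therefore zero.
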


Let $\kappa$ be a positive semidefinite kernel on ${\cal X}$ and let
$\left ( \mathbf{H}_{\kappa}, \ps{\cdot,\cdot}_{\mathbf{H}_{\kappa}} \right)$ 
be the RKHS spanned by $\kappa$.
Let $\bar{{\cal H}} = 
\left ( \mathbf{H}_{\kappa}, \ps{\cdot,\cdot}_{\mathbf{H}_{\kappa}} \right)^Q$ and
let ${\cal H} = \left ( \left ( \mathbf{H}_{\kappa}, 
\ps{\cdot,\cdot}_{\mathbf{H}_{\kappa}} \right) + \left \{ 1 \right \} \right)^Q$.
By construction, ${\cal H}$ is the class of vector-valued functions 
$h=\left({h_{k}}\right)_{1\le k\le Q}$ on ${\cal X}$ such that
$$
h(\cdot) = \left ( \sum_{i=1}^{m_k} \beta_{ik} \kappa \left ( x_{ik}, \cdot \right ) + b_k
\right )_{1 \leq k \leq Q}
$$
where the $x_{ik}$ are elements of $\cal X$, as well as the limits of these
functions when the sets $\left \{ x_{ik}: 1 \leq i
\leq m_{k} \right \}$ become dense in ${\cal X}$
in the norm induced by the dot product (see for instance \cite{Wah99}).
Due to Equation~\ref{eq:kernel_trick}, ${\cal H}$ can be seen as a multivariate
affine model on $\Phi \left ( {\cal X} \right )$.
Functions $h$ can then be rewritten as:
$$
h(\cdot) = \left ( \ps{w_k, \cdot} + b_k \right )_{1 \leq k \leq Q}
$$
where the vectors $w_k$ are elements of
$E_{\Phi \left ( {\cal X} \right )}$. They are thus described by the pair
$\left ( \mathbf{w}, \mathbf{b} \right )$ with
$\mathbf{w} = \left ( w_k \right )_{1 \leq k \leq Q} \in
E_{\Phi \left ( {\cal X} \right )}^Q$ and
$\mathbf{b} = \left ( b_k \right )_{1 \leq k \leq Q} \in \mathbb{R}^Q$.
As a consequence, $\bar{{\cal H}}$ can be seen as a multivariate
linear model on $\Phi \left ( {\cal X} \right )$, endowed with a norm
$\|.\|_{\bar{\cal H}}$ given by:
$$
\forall \bar{h} \in \bar{{\cal H}}, \;
\left \| \bar{h} \right \|_{\bar{\cal H}} =
\sqrt{\sum_{k=1}^Q \|w_k \|^2} = \left \| \mathbf{w} \right \|,
$$
where $\| w_k \| = \sqrt{\ps{w_k, w_k}}$.
With these definitions and propositions at hand, a generic definition of the M-SVMs can be
formulated as follows.

\begin{definition}[M-SVM, Definition~42 in \cite{Gue07b}]
\label{def:M_SVM}
Let $\left ( \left ( x_i, y_i \right ) \right )_{1 \leq i \leq m} \in
\left ( {\cal X} \times \ieg 1, Q \ied \right )^m$ and $\lambda \in \mathbb{R}_+^*$.
A {\em $Q$-category M-SVM} is a large margin discriminant model
obtained by minimizing over the hyperplane $\sum_{k=1}^Q h_k = 0$ of
${\cal H}$ a penalized risk $J_{\text{M-SVM}}$ of the form:
$$
J_{\text{M-SVM}} \left ( h \right ) = \sum_{i=1}^m \ell_{\text{M-SVM}} \left ( y_i,
h \left ( x_i \right ) \right ) + 
\lambda \left \| \bar{h} \right \|_{\bar{\cal H}}^2
$$
where the data fit component
involves a loss function $\ell_{\text{M-SVM}}$ which is convex.
\end{definition}

Three main models of M-SVMs can be found in
literature. The oldest one is the model of Weston and Watkins
\cite{WesWat98}, which corresponds to the loss function $\ell_{\text{WW}}$
given by:
$$
\ell_{\text{WW}}(y,h(x))=\sum_{k\neq y} \left ( 1 - h_y(x) + h_k(x) \right )_+,
$$ 
where the {\em hinge loss} function $(\cdot)_+$ is the function $\max(0,\cdot)$.
The second one is due to Crammer and Singer \cite{CraSin01} and
corresponds to the loss function $\ell_{\text{CS}}$ given by:
$$
\ell_{\text{CS}}(y,\bar{h}(x)) = \left ( 1 - \bar{h}_y(x) + 
\max_{k\neq y}\bar{h}_k(x)\right)_+.
$$
The most recent model is the one of Lee, Lin and Wahba \cite{LeeLinWah04}
which corresponds to the loss function $\ell_{\text{LLW}}$ given by:
\begin{equation}
\label{eq:loss_LLW}
\ell_{\text{LLW}} \left ( y,h(x) \right ) = \sum _{k \neq y}
\left ( h_k(x) + \frac{1}{Q-1} \right )_{+}.
\end{equation}
Among the three models, the M-SVM of Lee, Lin and Wahba is the only one
that implements asymptotically the Bayes decision rule. It is
{\em Fisher consistent} \cite{Zha04,TewBar07}.

\subsection{The M-SVM of Lee, Lin and Wahba}

The substitution in Definition~\ref{def:M_SVM} of $\ell_{\text{M-SVM}}$
with the expression of the loss function $\ell_{\text{LLW}}$ given
by Equation~\ref{eq:loss_LLW} provides us with the expressions of
the quadratic programming (QP) problems corresponding to the training algorithms of
the hard margin and soft margin versions of the M-SVM of Lee, Lin and Wahba.

\begin{problem}[Hard margin M-SVM]
$$
\min_{\mathbf{w}, \mathbf{b}} J_{\text{HM}} \left ( \mathbf{w}, \mathbf{b} \right )
$$
$$
s.t.
\begin{cases}
\ps{w_k , \Phi (x_i)} + b_k \leq - \frac{1}{Q-1}, \;\; (1 \leq i \leq m), 
(1 \leq k \neq y_i \leq Q) \\
\sum_{k=1}^Q w_k = 0\\
\sum_{k=1}^Q b_k = 0
\\
\end{cases}
$$
where 
$$ J_{\text{HM}} \left ( \mathbf{w}, \mathbf{b} \right ) = 
\frac{1}{2} \sum_{k=1}^Q {
\| w_k \| }^2.
$$
\label{problem:primalLLW}
\end{problem}

\begin{problem}[Soft margin M-SVM]
$$
\min_{\mathbf{w}, \mathbf{b}} J_{\text{SM}}\left ( \mathbf{w}, \mathbf{b} \right )
$$
$$
s.t.
\begin{cases}
\ps{w_k , \Phi (x_i)} + b_k \leq - \frac{1}{Q-1} + \xi_{ik}, \;\; (1 \leq i \leq m), 
(1 \leq k \neq y_i \leq Q) \\
\xi_{ik} \geq 0,\;\;(1 \leq i \leq m), (1 \leq k \neq y_i \leq Q) \\
\sum_{k=1}^Q w_k = 0\\
\sum_{k=1}^Q b_k = 0
\end{cases}
$$
where $$ J_{\text{SM}} \left ( \mathbf{w}, \mathbf{b} \right ) = 
\frac{1}{2} \sum_{k=1}^Q \left \| w_k \right \|^2 + C \sum_{i=1}^m \sum_{k \neq y_i}
\xi_{ik}.
$$
\label{problem:primalSMLLW}
\end{problem}

In Problem~\ref{problem:primalSMLLW},
the $\xi_{ik}$ are {\em slack variables} introduced in order to relax
the constraints of correct classification. The coefficient $C$, which
characterizes the trade-off between prediction accuracy on the training set
and smoothness of the solution, can be expressed in terms of the regularization
coefficient $\lambda$ as follows: $ C = (2 \lambda)^{-1}$. It is called
the {\em soft margin parameter}.
Instead of directly solving Problems~\ref{problem:primalLLW}
and \ref{problem:primalSMLLW}, one usually solves their Wolfe dual \cite{Fle87}.
We now derive the dual problem of Problem~\ref{problem:primalLLW}.
Giving the details of the implementation of the Lagrangian duality
will provide us with partial results which will prove useful in the sequel.

Let $\alpha = \left ( \alpha_{ik} \right )_{1 \leq i \leq m, 1 \leq k \leq Q}
\in \mathbb{R}_+^{Qm}$ be the vector of Lagrange multipliers
associated with the constraints of good classification.
It is for convenience of notation that this vector is expressed with double subscript
and that the dummy variables $\alpha_{iy_i}$, all equal to $0$, are introduced.
Let $\delta \in E_{\Phi \left ( {\cal X} \right )}$ be the Lagrange multiplier
associated with the constraint $\sum_{k=1}^Q w_k = 0$ and $\beta \in \mathbb{R}$
the Lagrange multiplier associated with the constraint $\sum_{k=1}^Q b_k = 0$.
The Lagrangian function of Problem~\ref{problem:primalLLW} is given by:

$$
L \left ( \mathbf{w}, \mathbf{b}, \alpha, \beta, \delta \right ) = 
$$
\begin{equation}
\frac{1}{2} \sum_{k=1}^Q \| w_k \|^2 -
\ps{\delta, \sum_{k=1}^Q w_k} -
\beta \sum_{k=1}^Q b_k + \sum_{i=1}^m \sum_{k=1}^Q \alpha_{ik}
\left ( \ps{w_k , \Phi (x_i)} + b_k + \frac{1}{Q-1} \right).
\label{eq:lagrangianMC}
\end{equation}
Setting the gradient of the Lagrangian function with respect to $w_k$ equal to
the null vector
provides us with $Q$ alternative expressions for the optimal value of vector $\delta$:
\begin{equation}
\delta^* = w_k^* + \sum_{i=1}^m \alpha_{ik}^* \Phi(x_i), \;\; (1 \leq k \leq Q).
\label{eq:grad_w}
\end{equation}
Since by hypothesis, $\sum_{k=1}^Q w_k^* = 0$, summing over the index $k$
provides us with the expression of $\delta^*$ as a function of dual variables only:
\begin{equation}
\delta^* = \frac{1}{Q} \sum_{i=1}^m \sum_{k=1}^Q \alpha_{ik}^* \Phi(x_i).
\label{eq:delta_opt}
\end{equation}
By substitution into (\ref{eq:grad_w}), we get the expression of the vectors
$w_k$ at the optimum:
$$
w_k^* =\frac{1}{Q} \sum_{i=1}^m \sum_{l=1}^Q \alpha_{il}^* \Phi(x_i) -
\sum_{i=1}^m \alpha_{ik}^* \Phi(x_i), \;\; (1 \leq k \leq Q)
$$
which can also be written as
\begin{equation}
w_k^* = \sum_{i=1}^m \sum_{l=1}^Q \alpha_{il}^*
\left ( \frac{1}{Q} - \delta_{k,l} \right) \Phi(x_i), \;\; (1 \leq k \leq Q)
\label{eq:w_k}
\end{equation}
where $\delta$ is the Kronecker symbol.

Let us now set the gradient of (\ref{eq:lagrangianMC}) with respect
to $\mathbf{b}$ equal to the null vector. It comes:
$$
\beta^* = \sum_{i=1}^{m} \alpha_{ik}^*, \;\;(1 \leq k \leq Q)
\label{gradB_MC}
$$
and thus
$$
\sum_{i=1}^m \sum_{l=1}^Q \alpha_{il}^*
\left( \frac{1}{Q} - \delta_{k,l} \right) = 0, \;\; (1 \leq k \leq Q).
$$
Given the constraint $\sum_{k=1}^Q b_k=0$, this implies that:
\begin{equation}
\sum_{i=1}^m \sum_{k=1}^Q \alpha_{ik}^* b_k^* = \beta^* \sum_{k=1}^Q b_k^* = 0.
\label{eq:delta}
\end{equation}
By application of~(\ref{eq:w_k}),
$$
\sum_{k=1}^Q \left \| w_k^* \right \|^2 = 
\sum_{k=1}^Q \ps{ \sum_{i=1}^m \sum_{l=1}^Q \alpha_{il}^*
\left ( \frac{1}{Q} - \delta_{k,l} \right ) \Phi(x_i),
\sum_{j=1}^m \sum_{n=1}^Q
\alpha_{jn}^* \left ( \frac{1}{Q} - \delta_{k,n} \right ) \Phi(x_j) }
$$
$$
= \sum_{i=1}^m \sum_{j=1}^m \sum_{l=1}^Q \sum_{n=1}^Q
\alpha_{il}^* \alpha_{jn}^*  \ps{\Phi(x_i),\Phi(x_j)}
\sum_{k=1}^Q \left ( \frac{1}{Q} - \delta_{k,l} \right )
\left ( \frac{1}{Q}-\delta_{k,n} \right )
$$
\begin{equation}
\label{eq:yann}
= \sum_{i=1}^m \sum_{j=1}^m \sum_{l=1}^Q \sum_{n=1}^Q \alpha_{il}^* \alpha_{jn}^*
\left ( \delta_{l,n} - \frac{1}{Q} \right ) \kappa(x_i,x_j).
\end{equation}
Still by application of~(\ref{eq:w_k}),
$$
\sum_{i=1}^m \sum_{k=1}^Q \alpha_{ik}^* \ps{w_k^* , \Phi (x_i)} =
\sum_{i=1}^m \sum_{k=1}^Q \alpha_{ik}^*
\ps{ \sum_{j=1}^m \sum_{l=1}^Q \alpha_{jl}^*
\left ( \frac{1}{Q} - \delta_{k,l} \right) \Phi(x_j), \Phi (x_i)}
$$
\begin{equation}
\label{eq:yann2}
= \sum_{i=1}^m \sum_{j=1}^m \sum_{k=1}^Q \sum_{l=1}^Q \alpha_{ik}^* \alpha_{jl}^*
\left ( \frac{1}{Q} - \delta_{k,l} \right) \kappa(x_i,x_j).
\end{equation}
Combining (\ref{eq:yann}) and (\ref{eq:yann2}) gives:
$$
\frac{1}{2} \sum_{k=1}^Q \left \| w_k^* \right \|^2 +
\sum_{i=1}^m \sum_{k=1}^Q \alpha_{ik}^* \ps{w_k^* , \Phi (x_i)} =
- \frac{1}{2} \sum_{k=1}^Q \left \| w_k^* \right \|^2
$$
\begin{equation}
\label{eq:yann3}
= - \frac{1}{2} \sum_{i=1}^m \sum_{j=1}^m \sum_{k=1}^Q \sum_{l=1}^Q
\alpha_{ik}^* \alpha_{jl}^* \left ( \delta_{k,l} - \frac{1}{Q} \right) \kappa(x_i,x_j).
\end{equation}
In what follows, we use the notation $e_n$ to designate the vector of $\mathbb{R}^n$
such that all its components are equal to $e$. Let $H$ be the matrix
of ${\cal M}_{Qm,Qm} \left ( \mathbb{R} \right )$ of general term:
$$
h_{ik,jl} = \left ( \delta_{k,l} -\frac{1}{Q} \right ) \kappa(x_i,x_j).
$$ 
With these notations at hand,
reporting (\ref{eq:delta}) and (\ref{eq:yann3}) in (\ref{eq:lagrangianMC})
provides us with the algebraic expression of the Lagrangian function at the optimum:
$$
L \left ( \alpha^* \right ) = - \frac{1}{2} {\alpha^*}^T H \alpha^*
+\frac{1}{Q-1}1_{Qm}^T \alpha^*.
$$
This eventually provides us with the Wolfe dual formulation of
Problem~\ref{problem:primalLLW}:

\begin{problem}[Hard margin M-SVM, dual formulation]
\label{problem:dualMC}
$$
\max_{\alpha} J_{\text{LLW,d}} ( \alpha )
$$
$$
s.t. 
\begin{cases}
\alpha_{ik} \geq 0, \;\;  (1 \leq i \leq m), (1 \leq k \neq y_i \leq Q)\\
\sum_{i=1}^m \sum_{l=1}^Q \alpha_{il}
\left( \frac{1}{Q} - \delta_{k,l} \right ) = 0, \;\; (1 \leq k \leq Q)
\end{cases}
$$
where
$$
J_{\text{LLW,d}}(\alpha) = - \frac{1}{2} \alpha^T H \alpha
+ \frac{1}{Q-1} 1_{Qm}^T \alpha,
$$
with the general term of the Hessian matrix $H$ being
$$
h_{ik,jl} = \left ( \delta_{k,l} -\frac{1}{Q} \right ) \kappa(x_i,x_j).
$$
\end{problem}

Let the couple $\left ( \mathbf{w}^0, \mathbf{b}^0 \right )$
denote the optimal solution of Problem~\ref{problem:primalLLW}
and equivalently, let
$\alpha^0 = \left ( \alpha_{ik}^0 \right )_{1 \leq i \leq m, 1 \leq k \leq Q}
\in \mathbb{R}_+^{Qm}$ 
be the optimal solution of Problem~\ref{problem:dualMC}. According to
(\ref{eq:w_k}), the expression of $w_k^0$ is then:
$$
w_k^0 = \sum_{i=1}^m \sum_{l=1}^Q \alpha_{il}^0
\left ( \frac{1}{Q} - \delta_{k,l} \right ) \Phi(x_i).
$$


\subsection{Geometrical margins}
\label{sec:multi_class_margins}

From a geometrical point of view, the algorithms described
above tend to construct a set of hyperplanes
$\left \{ \left ( w_k, b_k \right ): 1 \leq k \leq Q \right\}$
that maximize globally the $C_Q^2$ {\em margins} between the differents categories.
If these margins are defined as in the bi-class case, their analytical
expression is more complex.

\begin{definition}[Geometrical margins, Definition~7 in \cite{Gue07a}]
Let us consider a $Q$-category M-SVM (a function of ${\cal H}$) classifying
the examples of its training set
$\left \{ \left ( x_i, y_i \right ): 1 \leq i \leq m \right \}$ without error.
$\gamma_{kl}$, its {\em margin between categories $k$ and $l$}, is
defined as the smallest distance of a
point either in $k$ or $l$ to the hyperplane separating those
categories. Let us denote
$$
d_{\text{M-SVM}} = \min_{1 \leq k < l \leq Q} \left \{
\min \left [ \min_{i: y_i = k}  \left (
h_k(x_i) - h_l(x_i) \right ), \min_{j: y_j = l}
\left ( h_l(x_j) - h_k(x_j) \right ) \right ] \right \}
$$
and for $1\leq k<l\leq Q$, let $d_{\text{M-SVM}, kl}$ be:
$$
d_{\text{M-SVM}, kl} = \frac{1}{d_{\text{M-SVM}}} \min \left [ \min_{i: y_i = k}
\left ( h_k(x_i) - h_l(x_i) - d_{\text{M-SVM}} \right ), \min_{j: y_j = l}
\left ( h_l(x_j) - h_k(x_j)
- d_{\text{M-SVM}} \right ) \right ].
$$
Then we have:
$$
\gamma_{kl} = d_{\text{M-SVM}} \frac{1+d_{\text{M-SVM}, kl}}{\| w_k - w_l \|}.
$$
\label{def:margins}
\end{definition}
Given the constraints of Problem~\ref{problem:primalLLW},
the expression of $d_{\text{M-SVM}}$ corresponding to the M-SVM of Lee, Lin and Wahba is:
$$
d_{\text{LLW}} = \frac{Q}{Q-1}.
$$

\begin{remark}
The values of the parameters $d_{\text{M-SVM}, kl}$ 
(or $d_{\text{LLW},kl}$ in the case of interest)
are known as soon as the pair $\left ( \mathbf{w}^0, \mathbf{b}^0 \right )$ is known.
\end{remark}

The connection between the geometrical margins and the penalizer of
$J_{\text{M-SVM}}$ is given by the following equation:
\begin{equation}
\sum_{k<l}{\| w_k - w_l \|}^2 = Q  \sum_{k=1}^Q \| w_k\|^2,
\label{sumwl}
\end{equation}
the proof of which can for instance be found in Chapter~2 of \cite{Gue07a}.
We introduce now a result needed in the proof of the master theorem of this report.
\begin{proposition}
\label{prop:primal_dual}
For the hard margin M-SVM of Lee, Lin and Wahba, we have:
$$
\frac{Q}{(Q-1)^2} \sum_{k<l}
\left ( \frac{1 + d_{\text{LLW}, kl}}{\gamma_{kl}} \right )^2 =
\sum_{k=1}^Q \| w_k^0 \|^2 =
{\alpha^0}^T H \alpha^0 =
\frac{1}{Q-1} 1_{Qm}^T \alpha^0.
$$
\end{proposition}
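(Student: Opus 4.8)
The plan is to prove the three equalities in turn, pivoting on the central quantity $\sum_{k=1}^{Q}\|w_k^0\|^2$. I would first establish $\sum_{k=1}^{Q}\|w_k^0\|^2 = {\alpha^0}^T H \alpha^0$, which requires no fresh computation. Expanding the right-hand side with $h_{ik,jl} = (\delta_{k,l}-\frac{1}{Q})\kappa(x_i,x_j)$ gives
$$
{\alpha^0}^T H \alpha^0 = \sum_{i=1}^m\sum_{j=1}^m\sum_{k=1}^Q\sum_{l=1}^Q \alpha_{ik}^0\alpha_{jl}^0\Big(\delta_{k,l}-\tfrac{1}{Q}\Big)\kappa(x_i,x_j),
$$
which is exactly the expression obtained for $\sum_{k=1}^Q\|w_k^*\|^2$ in Equation~\ref{eq:yann}, evaluated at $\alpha^0$ (up to relabeling the summation indices). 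So this is merely an identification of one and the same quadratic form.

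\emph{Leftmost equality.} I would specialize the margin formula of Definition~\ref{def:margins} to the value $d_{\text{LLW}}=\frac{Q}{Q-1}$, which gives $\gamma_{kl} = \frac{Q}{Q-1}\cdot\frac{1+d_{\text{LLW},kl}}{\|w_k^0-w_l^0\|}$. Solving for the ratio and squaring yields $\big(\frac{1+d_{\text{LLW},kl}}{\gamma_{kl}}\big)^2 = \frac{(Q-1)^2}{Q^2}\|w_k^0-w_l^0\|^2$. Summing over $k<l$ and multiplying by $\frac{Q}{(Q-1)^2}$ reduces the prefactor to $\frac{1}{Q}\sum_{k<l}\|w_k^0-w_l^0\|^2$; applying Equation~\ref{sumwl}, namely $\sum_{k<l}\|w_k^0-w_l^0\|^2 = Q\sum_{k=1}^Q\|w_k^0\|^2$, then collapses the prefactor to $1$ and returns $\sum_{k=1}^Q\|w_k^0\|^2$. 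This step is purely algebraic once the margin definition is in hand.

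\emph{Rightmost equality.} For ${\alpha^0}^T H \alpha^0 = \frac{1}{Q-1}1_{Qm}^T\alpha^0$ I would invoke strong duality. Problem~\ref{problem:primalLLW} is a convex quadratic program with affine constraints, so its optimal value $J_{\text{HM}}(\mathbf{w}^0,\mathbf{b}^0)=\frac{1}{2}\sum_{k=1}^Q\|w_k^0\|^2$ coincides with the optimal dual value $J_{\text{LLW,d}}(\alpha^0)=-\frac{1}{2}{\alpha^0}^T H\alpha^0 + \frac{1}{Q-1}1_{Qm}^T\alpha^0$. Substituting the central equality into $\frac{1}{2}\sum_{k=1}^Q\|w_k^0\|^2 = -\frac{1}{2}{\alpha^0}^T H\alpha^0+\frac{1}{Q-1}1_{Qm}^T\alpha^0$ and rearranging gives the claim at once. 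Equivalently, one can reach the same identity by carrying Equations~\ref{eq:yann3} and~\ref{eq:delta} into the Lagrangian~\ref{eq:lagrangianMC} evaluated at the saddle point.

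\emph{Main obstacle.} None of the three steps is deep given the preceding derivations; the one demanding the most care is the rightmost equality, where one must be sure strong duality actually holds and that the primal and dual optima coincide. One must also track the dummy multipliers $\alpha_{iy_i}^0=0$ so that the sums over the full range $1\le k\le Q$ remain legitimate throughout.
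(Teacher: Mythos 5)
Your proposal is correct. For the first two equalities it coincides with the paper's own (very terse) argument: the leftmost one is exactly the combination of Definition~\ref{def:margins} (specialized to $d_{\text{LLW}}=\frac{Q}{Q-1}$) with Equation~\ref{sumwl} that the authors invoke, and the central one is the identification of the quadratic form computed in Equation~\ref{eq:yann}. The only place you diverge is the rightmost equality. The paper proves it by summing the complementary slackness conditions $\alpha_{ik}^0\left(\langle w_k^0,\Phi(x_i)\rangle+b_k^0+\frac{1}{Q-1}\right)=0$ over $i$ and $k$, eliminating the $b_k^0$ terms via Equation~\ref{eq:delta}, and recognizing $\sum_{i,k}\alpha_{ik}^0\langle w_k^0,\Phi(x_i)\rangle=-{\alpha^0}^T H\alpha^0$ from Equation~\ref{eq:yann3}; you instead invoke the absence of a duality gap between Problems~\ref{problem:primalLLW} and~\ref{problem:dualMC} and then eliminate $\sum_k\|w_k^0\|^2$ using the central equality. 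The two routes are equivalent in substance---for this convex QP the zero duality gap is precisely what the summed complementarity conditions express once the stationarity conditions have been substituted into the Lagrangian---but yours avoids redoing any computation at the price of having to justify strong duality, which you correctly flag as the delicate point; it does hold here since the constraints are affine (no Slater-type condition needed) and a primal optimum exists by the hard-margin separability assumption. Your remark about the dummy multipliers $\alpha_{iy_i}^0=0$ is also the right thing to watch: it is what makes the sums over the full range $1\le k\le Q$ legitimate throughout.
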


\begin{proof}

\begin{itemize}

\item[$\bullet$] $\frac{Q}{(Q-1)^2} \sum_{k<l}
\left ( \frac{1 + d_{\text{LLW}, kl}}{\gamma_{kl}} \right )^2 =
\sum_{k=1}^Q \| w_k^0 \|^2$\\

This equation is a direct consequence of Definition~\ref{def:margins} and
Equation~\ref{sumwl}.

\item[$\bullet$] $\sum_{k=1}^Q \| w_k^0 \|^2 = {\alpha^0}^T H \alpha^0$\\

This is a direct consequence of Equation~\ref{eq:yann3} and the definition
of matrix $H$.

\item ${\alpha^0}^T H \alpha^0 = \frac{1}{Q-1} 1_{Qm}^T \alpha^0$\\

One of the Kuhn-Tucker optimality conditions is:
$$ 
\alpha_{ik}^0 \left ( \ps{w_k^0, \Phi(x_i)} + b_k^0 + \frac{1}{Q-1}\right ) = 0, \;\;
(1 \leq i \leq m), (1 \leq k \neq y_i \leq Q),
$$
and thus:
$$
\sum_{i=1}^m \sum_{k=1}^Q \alpha_{ik}^0 \left ( \ps{w_k^0, \Phi(x_i)} + b_k^0 + 
\frac{1}{Q-1}\right ) = 0. 
$$
By application of (\ref{eq:delta}), this simplifies into
$$
\sum_{i=1}^m \sum_{k=1}^Q \alpha_{ik}^0 \ps{w_k^0, \Phi(x_i)}
+ \frac{1}{Q-1} 1_{Qm}^T \alpha^0 = 0.
$$
Since
$$
\sum_{i=1}^m \sum_{k=1}^Q \alpha_{ik}^0 \ps{w_k^0, \Phi(x_i)} =
- {\alpha^0}^T H \alpha^0 
$$
is a direct consequence of (\ref{eq:yann3}), this concludes the proof.
\end{itemize}
\end{proof}

\newpage


\section{The $\text{M-SVM}^2$}
\label{sec:SUR}

\subsection{Quadratic loss multi-class SVMs: motivation and principle}
The M-SVMs presented in Section~\ref{sec:architecture}
share a common feature with the standard pattern recognition SVM:
the contribution of the slack variables
to their objective functions is linear. Let $\xi$ be the vector of
these variables. In the cases of the M-SVMs of
Weston and Watkins and Lee, Lin and Wahba, we have
$\xi = \left ( \xi_{ik} \right )_{1 \leq i \leq m, 1 \leq k \leq Q}$
with $\left ( \xi_{iy_i} \right )_{1 \leq i \leq m} = 0_m$, and in the case
of the model of Crammer and Singer, it is simply
$\xi = \left ( \xi_i \right )_{1 \leq i \leq m}$. In both cases, the
contribution to the objective function is $C \| \xi \|_1$.

In the bi-class case, there exists a variant of the standard SVM
which is known as the {\em $2$-norm SVM} since for this machine, 
the empirical contribution
to the objective function is $C \| \xi \|_2^2$. Its main advantage,
underlined for instance in the Chapter~7 of \cite{ShaCri04}, is that its training
algorithm can be expressed, after an appropriate
change of kernel, as the training algorithm of a hard margin machine.
As a consequence, its leave-one-out error can be upper bounded thanks
to the radius-margin bound.

Unfortunately, a naive extension of the $2$-norm SVM to the multi-class case,
resulting from substituting in the objective function of either of the three M-SVMs
$\| \xi \|_1$ with $\| \xi \|_2^2$, does not preserve this property.
Section~2.4.1.4 of
\cite{Gue07a} gives detailed explanations about that point.
The strategy that we propose to exhibit interesting multi-class generalizations
of the $2$-norm SVM consists in studying the class of {\em quadratic loss M-SVMs},
i.e., the class of extensions of the M-SVMs such that the contribution of
the slack variables is a quadratic form:
$$
C \xi^T M \xi = C \sum_{i=1}^m \sum_{j=1}^m \sum_{k=1}^Q \sum_{l=1}^Q
m_{ik,jl} \xi_{ik} \xi_{jl}
$$
where $M = \left ( m_{ik,jl} \right )_{1 \leq i,j \leq m, 1 \leq k,l \leq Q}$ 
is a symmetric positive semidefinite matrix.

\subsection{The $\text{M-SVM}^2$ as a multi-class generalization of the $2$-norm SVM}

In this section, we establish that the idea introduced above provides
us with a solution to the problem of interest when the M-SVM used is the one
of Lee, Lin and Wahba and the general term of the matrix $M$ is
$m_{ik,jl} = \left ( \delta_{k,l} - \frac{1}{Q} \right ) \delta_{i,j}$.
The corresponding machine, named $\text{M-SVM}^2$, 
generalizes the $2$-norm SVM to an arbitrary
(but finite) number of categories.

\begin{problem}[$\text{M-SVM}^2$]
$$
\min_{\mathbf{w}, \mathbf{b}} J_{\text{M-SVM}^2} (\mathbf{w}, \mathbf{b})
$$
$$
s.t.
\begin{cases}
\ps{w_k, \Phi (x_i)} + b_k \leq - \frac{1}{Q-1} + \xi_{ik}, 
\;\; (1 \leq i \leq m), (1 \leq k \neq y_i \leq Q) \\
\sum_{k=1}^Q w_k = 0 \\
\sum_{k=1}^Q b_k = 0
\end{cases}
$$
where $$ J_{\text{M-SVM}^2}(\mathbf{w}, \mathbf{b}) = 
\frac{1}{2} \sum_{k=1}^Q \| w_k \|^2 +
C \sum_{i=1}^m \sum_{j=1}^m \sum_{k=1}^Q \sum_{l=1}^Q
\left ( \delta_{k,l} -\frac{1}{Q} \right ) \delta_{i,j}
\xi_{ik} \xi_{jl}.
$$
\label{problem:primalMSVM2}
\end{problem}
Note that as in the bi-class case, it is useless to introduce
nonnegativity constraints for the slack variables.
The Lagrangian function associated with Problem~\ref{problem:primalMSVM2} is thus
$$
L \left ( \mathbf{w}, \mathbf{b}, \xi, \alpha, \beta, \delta \right ) =
$$
$$
\frac{1}{2} \sum_{k=1}^Q \| w_k \|^2 +
C \xi^T M \xi -
\ps{\delta, \sum_{k=1}^Q w_k} - \beta \sum_{k=1}^Q b_k
$$
\begin{equation}
+ \sum_{i=1}^m \sum_{k=1}^Q \alpha_{ik} \left (
\ps{w_k , \Phi(x_i)} + b_k + \frac{1}{Q-1} - \xi_{ik} \right ).
\label{lagrangianMSVM2}
\end{equation}
Setting the gradient of $L$ with respect to $\xi$ equal to
the null vector gives
\begin{equation}
2C M \xi^* = \alpha^*
\label{alphaikMSVM2}
\end{equation}
which has for immediate consequence that
\begin{equation}
C {\xi^*}^T M \xi^* - {\alpha^*}^T \xi^* =
-C {\xi^*}^T M \xi^*.
\label{simplif}
\end{equation}
Using the same reasoning that we used to derive the objective
function of Problem \ref{problem:dualMC} and (\ref{simplif}),
at the optimum, (\ref{lagrangianMSVM2}) simplifies into:
\begin{equation}
L \left ( \xi^*, \alpha^* \right ) =
-\frac{1}{2} {\alpha^*}^T H {\alpha^*}
- C {\xi^*}^T M \xi^*
+ \frac{1}{Q-1} 1_{Qm}^T \alpha^*.
\label{lagrangianMSVM2-1}
\end{equation}
Besides, using (\ref{alphaikMSVM2}),
$$
\alpha_{in}^* \alpha_{ip}^* = 4C^2
\sum_{k=1}^Q \left ( \delta_{k,n}- \frac{1}{Q} \right ) \xi_{ik}^*
\sum_{l=1}^Q \left ( \delta_{l,p}- \frac{1}{Q} \right ) \xi_{il}^*
$$
and thus
$$
\alpha_{in}^* \alpha_{ip}^* = 4C^2
\sum_{k=1}^Q \sum_{l=1}^Q \left (
\delta_{k,n}\delta_{l,p} - (\delta_{k,n} + \delta_{l,p}) \frac{1}{Q} + \frac{1}{Q^2}
\right ) \xi_{ik}^* \xi_{il}^*.
$$
By a double summation over $n$ and $p$, we have:
$$
\sum_{n=1}^Q \sum_{p=1}^Q \alpha_{in}^* \alpha_{ip}^*
\left ( \delta_{n,p} - \frac{1}{Q} \right ) =
4C^2 \sum_{k=1}^Q \sum_{l=1}^Q \xi_{ik}^* \xi_{il}^*
\sum_{n=1}^Q \sum_{p=1}^Q
\left (
\delta_{k,n}\delta_{l,p} - (\delta_{k,n} + \delta_{l,p}) \frac{1}{Q} + \frac{1}{Q^2}
\right )
\left ( \delta_{n,p} - \frac{1}{Q} \right ).
$$
Since
$$
\sum_{n=1}^Q \sum_{p=1}^Q
\left (
\delta_{k,n}\delta_{l,p} - (\delta_{k,n} + \delta_{l,p}) \frac{1}{Q} + \frac{1}{Q^2}
\right )
\left ( \delta_{n,p} - \frac{1}{Q} \right ) = \delta_{k,l} - \frac{1}{Q},
$$
this simplifies into
$$
\sum_{n=1}^Q \sum_{p=1}^Q \alpha_{in}^* \alpha_{ip}^*
\left ( \delta_{n,p} - \frac{1}{Q} \right ) = 4C^2 
\sum_{k=1}^Q \sum_{l=1}^Q \left ( \delta_{k,l} - \frac{1}{Q}\right ) 
\xi_{ik}^* \xi_{il}^*.
$$
Finally, a double summation over $i$ and $j$ implies that
$$
{\alpha^*}^T M {\alpha^*} = 4C^2 {\xi^*}^T M {\xi^*}.
$$
A substitution into (\ref{lagrangianMSVM2-1}) provides us with:
$$
L \left ( \alpha^* \right ) =
-\frac{1}{2} {\alpha^*}^T \left ( H + \frac{1}{2C} M \right ) {\alpha^*}
+ \frac{1}{Q-1} 1_{Qm}^T \alpha^*.
$$
As in the case of the hard margin version of the M-SVM of Lee, Lin and Wahba,
setting the gradient of (\ref{lagrangianMSVM2}) with respect to $\mathbf{b}$
equal to the null vector gives:
$$
\sum_{i=1}^m \sum_{l=1}^Q \alpha_{il}^*
\left( \frac{1}{Q} - \delta_{k,l} \right) = 0, \;\; (1 \leq k \leq Q).
$$
Putting things together, we obtain the following expression for the dual problem
of Problem~\ref{problem:primalMSVM2}:

\begin{problem}[$\text{M-SVM}^2$, dual formulation]
\label{problem:dualM_SVM2}
$$
\max_{\alpha} J_{\text{M-SVM}^2,d} ( \alpha )
$$
$$
s.t.
\begin{cases}
\alpha_{ik} \geq 0, \;\;  (1 \leq i \leq m), (1 \leq k \neq y_i \leq Q)\\
\sum_{i=1}^m \sum_{l=1}^Q \alpha_{il}
\left( \frac{1}{Q} - \delta_{k,l} \right ) = 0, \;\; (1 \leq k \leq Q)
\end{cases}
$$
where
$$
J_{\text{M-SVM}^2,d}(\alpha) = 
- \frac{1}{2} \alpha^T \left ( H + \frac{1}{2C} M \right ) \alpha
+\frac{1}{Q-1}1_{Qm}^T \alpha.
$$
\end{problem}

Due to the definitions of the matrices $H$ and $M$,
this is precisely Problem~\ref{problem:dualMC}
with the kernel $\kappa$ replaced by a kernel $\kappa'$ such that:
$$
\kappa'(x_i, x_j) = \kappa(x_i, x_j) + \frac{1}{2C} \delta_{i,j}, \;\;
(1 \leq i,j \leq m).
$$
When $Q=2$, the M-SVM of Lee, Lin and Wahba, like the two other ones, 
is equivalent to the standard bi-class SVM (see for instance \cite{Gue07a}).
Furthermore, in that case, we get $\xi^T M \xi = \frac{1}{2} \| \xi \|_2^2$.
The $\text{M-SVM}^2$ is thus equivalent to the $2$-norm SVM.

\newpage


\section{Multi-Class Radius-Margin Bound on the Leave-One-Out Error of 
the $\text{M-SVM}^2$}
\label{sec:RM_bound}

To begin with, we must recall Vapnik's initial
bi-class theorem (see Chapter~10 of \cite{Vap98}), which is based on an intermediate
result of central importance known as the ``key lemma''.

\subsection{Bi-class radius-margin bound}

\begin{lemma}[Bi-class key lemma]
\label{lemma:keyBC}
Let us consider a hard margin bi-class SVM on a domain
${\cal X}$. Suppose that it is trained on a set
$d_m = \left\{(x_i, y_i): 1\leq i\leq m  \right\}$ of $m$
couples of ${\cal X} \times \left \{ -1, 1 \right \}$ (the points
of which it separates without error). Consider now the same
machine, trained on $d_m \setminus \left \{ (x_p, y_p) \right \}$.
If it makes an error on $(x_p, y_p)$, then the inequality
$$
\alpha_p^0 \geq \frac{1}{{\cal D}_m^2}
$$
holds, where ${\cal D}_m$ is the diameter of the smallest sphere containing
the images by the feature map of the support vectors of the initial machine.
\end{lemma}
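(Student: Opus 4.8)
The plan is to prove the bi-class key lemma by a leave-one-out stability argument, comparing the optimal dual solutions of the machine trained on the full sample $d_m$ with the machine trained on $d_m \setminus \{(x_p,y_p)\}$. Let $\alpha^0 = (\alpha_i^0)_{1 \leq i \leq m}$ denote the optimal dual variables of the full machine, and write $w^0 = \sum_{i=1}^m \alpha_i^0 y_i \Phi(x_i)$ for the corresponding weight vector (I work in the feature space, using the kernel trick of Equation~\ref{eq:kernel_trick}). Because the reduced machine misclassifies $(x_p,y_p)$, I want to show that removing example $p$ forces a change in the dual objective that is large enough to pin down a lower bound on $\alpha_p^0$ in terms of the radius of the enclosing sphere.

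\textbf{Step one: a perturbation that removes the contribution of example $p$.} I would start from the full optimal solution $\alpha^0$ and construct a candidate dual vector $\beta$ that is feasible for the reduced problem, obtained by zeroing out $\alpha_p^0$ and redistributing its mass so as to keep the equality constraint $\sum_i \beta_i y_i = 0$ satisfied. Since $\alpha^0$ maximizes the full dual objective and $\beta$ (suitably restricted) is feasible for it too, comparing objective values yields an inequality. Symmetrically, the reduced optimum $\tilde\alpha$ extended by $\tilde\alpha_p = 0$ is feasible for the full problem. The difference of the two objective values will be controlled from one side by the misclassification hypothesis (which says the reduced machine gives $(x_p,y_p)$ a functional margin $y_p \tilde h(x_p) \leq 0$, equivalently the margin constraint is violated) and from the other side by a quadratic term in $\alpha_p^0$.

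\textbf{Step two: extract the lower bound.} The key algebraic fact I expect to use is that the change in the objective when deactivating the $p$-th multiplier is bounded above by a term of the form $\alpha_p^0 \cdot (\text{something involving } \kappa(x_p,x_p) \text{ and cross terms})$, and that this ``something'' is at most $\tfrac14 {\cal D}_m^2$ once I bound the relevant distances by the diameter of the smallest enclosing sphere of the support vectors' images. Combining with the sign information from the misclassification gives $\alpha_p^0 \, {\cal D}_m^2 \geq 1$, i.e. $\alpha_p^0 \geq 1/{\cal D}_m^2$, which is exactly the claim.

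\textbf{The main obstacle} I anticipate is making the redistribution of the multiplier $\alpha_p^0$ both dual-feasible (respecting the linear constraint and nonnegativity) and tight enough that the induced objective gap is genuinely quadratic in $\alpha_p^0$ with a coefficient controlled by the radius rather than by an uncontrolled norm. The natural device here is to center the feature images (work relative to the center of the smallest enclosing sphere), since the constraint $\sum_i \alpha_i^0 y_i = 0$ makes $w^0$ invariant under such a translation, so that every inner product $\langle \Phi(x_i), \Phi(x_j)\rangle$ appearing in the objective can be replaced by one between centered vectors whose norms are at most the radius ${\cal D}_m/2$. Getting this centering to interact correctly with the misclassification inequality — and verifying that the convexity of the dual guarantees the comparison runs in the right direction — is the delicate point; once the centering is in place, the rest is the routine quadratic estimate sketched above.
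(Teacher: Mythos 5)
Your overall architecture is the right one, and it is in fact the approach the paper itself relies on: the paper does not prove Lemma~\ref{lemma:keyBC} (it is recalled from Vapnik), but its proof of the multi-class key lemma (Lemma~\ref{lemma:keyMC}) is exactly the two-sided comparison you describe --- perturb $\alpha^0$ by a vector $\lambda^p$ that forces the $p$-th multiplier to zero and redistributes its mass over opposite-class multipliers so as to remain dual-feasible, perturb $\alpha^p$ by a step $K\mu^p$ that reactivates the misclassified point, and sandwich $J(\alpha^0)-J(\alpha^p)$ between the two resulting objective gaps. One remark on your ``delicate point'': the centering at the center of the enclosing sphere is not needed. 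Since $\sum_i\lambda_i^p y_i=0$, the vector $\sum_i\lambda_i^p y_i\Phi(x_i)$ is $\bigl(\sum_{i:y_i=1}\lambda_i^p\bigr)$ times a difference of two convex combinations of images of support vectors, whose norm is bounded by ${\cal D}_m$ directly; this is precisely how the paper handles the analogous term in the multi-class proof.

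The genuine problem is the quantitative bookkeeping in your Step two, which as written does not close. The quantity you control on the $\lambda^p$ side is the objective gap $J(\alpha^0)-J(\alpha^0-\lambda^p)$; after discarding the linear term (nonpositive by the KKT conditions and the feasibility of $\alpha^0$), this gap is bounded above by $\frac{1}{2}\|\sum_i\lambda_i^p y_i\Phi(x_i)\|^2\leq\frac{1}{2}(\alpha_p^0)^2{\cal D}_m^2$, which is \emph{quadratic} in $\alpha_p^0$, not of the form $\alpha_p^0\cdot(\text{const}\cdot{\cal D}_m^2)$ as you first assert (your final paragraph contradicts Step two on this point). On the other side, the misclassification $y_ph^p(x_p)\leq 0$ together with the KKT conditions for the reduced machine gives $J(\alpha^p+K\mu^p)-J(\alpha^p)\geq K-\frac{K^2}{2}{\cal D}_m^2$, whose maximum over $K$ is $\frac{1}{2{\cal D}_m^2}$. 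Combining the two sides yields $(\alpha_p^0)^2{\cal D}_m^4\geq 1$, and the lemma follows by taking a square root --- a step your linear accounting hides. If the linear-in-$\alpha_p^0$ estimate you had in mind is instead a bound on the change of the decision value, $y_p(h^0(x_p)-h^p(x_p))\leq\alpha_p^0 S_p^2$, that is the span-bound argument, a genuinely different proof; there the correct coefficient satisfies $S_p^2\leq{\cal D}_m^2$, not $\frac{1}{4}{\cal D}_m^2$. The factor $\frac{1}{4}{\cal D}_m^2$ would yield $\alpha_p^0\geq 4/{\cal D}_m^2$, which is false (a two-point training set already gives $\alpha_p^0=2/{\cal D}_m^2$). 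Replace Step two by the quadratic estimate and the explicit optimization over the step size $K$, and your plan becomes the standard proof.
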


\begin{theorem}[Bi-class radius-margin bound]
\label{theorem:ccl} 
Let $\gamma$ be the geometrical margin of
the hard margin SVM defined in Lemma~\ref{lemma:keyBC},
when trained on $d_m$. Let also
$\mathcal{L}_m$ be the number of errors resulting from applying a
leave-one-out cross-validation procedure to this machine. We have:
$$
\mathcal{L}_m \leq \frac {{\cal D}_m^2}{{\gamma}^2}.
$$
\end{theorem}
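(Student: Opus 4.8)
The plan is to combine Lemma~\ref{lemma:keyBC} with the standard primal--dual identity of the hard margin bi-class SVM through a counting argument over the dual variables. Let $\alpha^0$ be the optimal dual solution of the machine trained on the whole sample $d_m$, and let $\mathcal{E} \subseteq \{1, \dots, m\}$ be the set of indices $p$ for which the leave-one-out procedure (training on $d_m \setminus \{(x_p, y_p)\}$) misclassifies $(x_p, y_p)$; by definition $|\mathcal{E}| = \mathcal{L}_m$. For each $p \in \mathcal{E}$, Lemma~\ref{lemma:keyBC} yields $\alpha_p^0 \geq 1/{\cal D}_m^2$. Summing over $\mathcal{E}$ and then enlarging the index set (the dual variables being nonnegative) gives
$$
\frac{\mathcal{L}_m}{{\cal D}_m^2} \leq \sum_{p \in \mathcal{E}} \alpha_p^0 \leq \sum_{p=1}^m \alpha_p^0 .
$$

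The crux is to establish the bi-class analogue of Proposition~\ref{prop:primal_dual}, namely $\sum_{p=1}^m \alpha_p^0 = \| w^0 \|^2 = 1/\gamma^2$. Starting from the stationarity condition $w^0 = \sum_p \alpha_p^0 y_p \Phi(x_p)$, one has $\| w^0 \|^2 = \sum_p \alpha_p^0 y_p \langle w^0, \Phi(x_p) \rangle$. The Kuhn--Tucker complementarity conditions force $y_p(\langle w^0, \Phi(x_p)\rangle + b^0) = 1$ for every support vector, so $\langle w^0, \Phi(x_p)\rangle = y_p - b^0$ (using $y_p^2 = 1$); substituting and invoking the dual equality constraint $\sum_p \alpha_p^0 y_p = 0$ collapses the expression to $\sum_p \alpha_p^0$. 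Since the geometrical margin of the hard margin SVM satisfies $\gamma = 1/\| w^0 \|$, this quantity is exactly $1/\gamma^2$.

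Chaining the two displays then yields $\mathcal{L}_m / {\cal D}_m^2 \leq 1/\gamma^2$, which rearranges to the announced bound. I expect the primal--dual identity to be the main obstacle: it is the only place where the specific geometry of the hard margin machine enters, and it is precisely the bridge that the key lemma is designed to exploit. The lemma converts ``a leave-one-out error at $p$'' into a lower bound on the single multiplier $\alpha_p^0$, while the identity converts the total dual mass $\sum_p \alpha_p^0$ into the inverse squared margin; everything in between is bookkeeping over nonnegative terms.
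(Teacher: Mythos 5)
Your proof is correct and follows exactly the route the paper itself takes: the paper states this bi-class theorem without proof (recalling it from Vapnik), but its own proof of the multi-class analogue (Theorem~\ref{theorem:MCccl}) uses precisely your two ingredients --- the key lemma to lower-bound each multiplier $\alpha_p^0$ at a leave-one-out error, summed over errors, and the primal--dual identity $1_{Qm}^T\alpha^0 = \sum_k \|w_k^0\|^2$ of Proposition~\ref{prop:primal_dual}, whose bi-class counterpart $\sum_p \alpha_p^0 = \|w^0\|^2 = 1/\gamma^2$ you derive correctly from stationarity, complementary slackness and the constraint $\sum_p \alpha_p^0 y_p = 0$. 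Nothing is missing.
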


The multi-class radius-margin bound that we propose in this report
is a direct generalization of the one proposed by Vapnik.
The first step of the proof consists in establishing
a ``multi-class key lemma''. This is the subject of the following subsection.

\subsection{Multi-class key lemma}

\begin{lemma}[Multi-class key lemma]
\label{lemma:keyMC}
Let us consider a $Q$-category hard margin M-SVM of Lee, Lin and Wahba
on a domain ${\cal X}$. Let
$d_m = \left\{(x_i, y_i): 1\leq i\leq m  \right\}$
be its training set. Consider now the same machine
trained on $d_m \setminus \left \{ (x_p, y_p) \right \}$.
If it makes an error on $(x_p, y_p)$, then the inequality
$$
\max_{k \in \ieg 1, Q \ied} \alpha^0_{pk} \geq \frac{1}{Q(Q-1) {\cal D}_m^2}
$$
holds, where ${\cal D}_m$ is the diameter of the smallest sphere 
of the feature space containing the set $\left\{\Phi(x_i): 1\leq i\leq m  \right\}$.
\end{lemma}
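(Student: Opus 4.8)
The plan is to generalize Vapnik's proof of the bi-class key lemma (Lemma~\ref{lemma:keyBC}), replacing the single margin condition by a \emph{pairwise} margin condition between the correct category $y_p$ and the competing category responsible for the leave-one-out error. Throughout I work with the hard margin machine of Problem~\ref{problem:primalLLW}, writing $h^0=(h^0_k)_k$ for the machine trained on $d_m$ and $h^p=(h^p_k)_k$ for the one trained on $d_m\setminus\{(x_p,y_p)\}$.

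First I would record the two facts that frame the argument. Since $h^0$ separates $d_m$ under the constraints of Problem~\ref{problem:primalLLW}, combining $h^0_k(x_p)\leq-\frac{1}{Q-1}$ for $k\neq y_p$ with $\sum_k h^0_k(x_p)=0$ gives, for every $k\neq y_p$, the margin inequality $h^0_{y_p}(x_p)-h^0_k(x_p)\geq \frac{Q}{Q-1}=d_{\text{LLW}}$. On the other hand, a leave-one-out error at $(x_p,y_p)$ means that $h^p_{y_p}(x_p)\leq h^p_{k^*}(x_p)$ for at least one $k^*\neq y_p$; fix such a $k^*$. Subtracting, the increment of the pairwise discriminant $h_{y_p}-h_{k^*}$ at $x_p$ between the two machines satisfies $[h^0_{y_p}(x_p)-h^0_{k^*}(x_p)]-[h^p_{y_p}(x_p)-h^p_{k^*}(x_p)]\geq \frac{Q}{Q-1}$. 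Before bounding this increment from above, I would center the smallest enclosing sphere at the origin of the feature space. This is legitimate because translating every $\Phi(x_i)$ by a fixed vector leaves both the objective and the feasible set of Problem~\ref{problem:dualMC} unchanged: the cross terms produced by the translation are annihilated by the equality constraints $\sum_i\sum_l\alpha_{il}(\frac{1}{Q}-\delta_{k,l})=0$, exactly as in the bi-class case. After centering, $\|\Phi(x_i)\|\leq\frac{1}{2}{\cal D}_m$ for all $i$.

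A direct computation then isolates the clean part of the increment. Using $w_k^0=\sum_i\sum_l\alpha_{il}^0(\frac{1}{Q}-\delta_{k,l})\Phi(x_i)$, the $\frac{1}{Q}$-shifts cancel in $w^0_{y_p}-w^0_{k^*}$ and, since $\alpha^0_{py_p}=0$, the self-contribution of $x_p$ to $w^0_{y_p}-w^0_{k^*}$ is exactly $\alpha^0_{pk^*}\Phi(x_p)$; its contribution to $h^0_{y_p}(x_p)-h^0_{k^*}(x_p)$ is therefore $\alpha^0_{pk^*}\|\Phi(x_p)\|^2\leq \frac{1}{4}{\cal D}_m^2\,\alpha^0_{pk^*}\leq\frac{1}{4}{\cal D}_m^2\max_k\alpha^0_{pk}$.

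The crux is to bound the \emph{whole} increment from above by a quantity of the form $Q^2{\cal D}_m^2\max_k\alpha^0_{pk}$, the constant being dictated by the need to match the lower bound $\frac{Q}{Q-1}$. The main obstacle — exactly as in the bi-class case — is that deleting $(x_p,y_p)$ perturbs \emph{all} the dual variables and the biases, not only the block $(\alpha_{pk})_k$, so the increment is not merely the self-contribution computed above. I would control the remaining terms by comparing the optimal value $J_{\text{LLW,d}}(\alpha^0)$ with the value attained by a point of the reduced feasible set obtained from $\alpha^0$ by zeroing its $p$-block and restoring feasibility of the $Q$ equality constraints through a minimal redistribution onto the other examples (the multi-class analogue of Vapnik's span construction). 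The identities of Proposition~\ref{prop:primal_dual} convert this objective gap into a statement about $\sum_k\|w_k\|^2$, and the geometric bound $\|\Phi(x_i)\|\leq\frac{1}{2}{\cal D}_m$ (the analogue of ``span $\leq$ diameter'') turns it into the required multiple of ${\cal D}_m^2$; the $Q$-dependent constant arises jointly from the $\frac{1}{Q}$-shifts in $h_{ik,jl}=(\delta_{k,l}-\frac{1}{Q})\kappa(x_i,x_j)$ and from bounding the competing-class multiplier $\alpha^0_{pk^*}$ by $\max_k\alpha^0_{pk}$. Combining the lower bound $\frac{Q}{Q-1}$ with this upper bound gives $\frac{Q}{Q-1}\leq Q^2{\cal D}_m^2\max_k\alpha^0_{pk}$, that is, $\max_{k\in\ieg 1,Q\ied}\alpha^0_{pk}\geq\frac{1}{Q(Q-1){\cal D}_m^2}$, as claimed.
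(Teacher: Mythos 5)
Your skeleton is the right one---and it is essentially the paper's: sandwich the drop in the dual objective $J_{\text{LLW,d}}$ caused by deleting $(x_p,y_p)$ between an upper bound coming from a feasible perturbation $\lambda^p$ of $\alpha^0$ towards the reduced feasible set, and a lower bound coming from a feasible perturbation $\mu^p$ of $\alpha^p$ back towards the full feasible set, then convert both sides into multiples of ${\cal D}_m^2$. But the two explicit constructions that make this sandwich produce the stated constant are missing, and they are the entire content of the lemma. On the lower-bound side, you never say how the error at $x_p$ enters the objective gap. The paper does not use your pairwise increment $[h^0_{y_p}(x_p)-h^0_{k^*}(x_p)]-[h^p_{y_p}(x_p)-h^p_{k^*}(x_p)]$ at all; it exploits the error only through the existence of $n\neq y_p$ with $h^p_n(x_p)\geq 0$, inserted into the linear term of the expansion of $J(\alpha^p+K_1\mu^p)-J(\alpha^p)$ along a very specific ascent direction: mass $K_2$ on the coordinate $(p,n)$ and mass $K_2$ on one coordinate $({\cal I}(k),k)$ per class $k\neq n$, where $x_{{\cal I}(k)}$ is a support vector of the reduced machine (so that the Kuhn--Tucker conditions give $h^p_k(x_{{\cal I}(k)})=-\frac{1}{Q-1}$ and the equality constraints $\sum_i\mu^p_{ik}=K_2$ for every $k$ hold). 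Only with this choice does the linear term collapse to $K_1K_2\bigl(h^p_n(x_p)+\sum_{k\neq n}h^p_k(x_{{\cal I}(k)})+\frac{Q}{Q-1}\bigr)\geq\frac{K_1K_2}{Q-1}$, which after optimizing over $K_1K_2$ yields the $(Q-1)^{-2}$ factor. Your ``minimal redistribution'' sentence gestures at the other direction, but there too the substance is a concrete verification: the equality constraints force $\sum_i\lambda^p_{ik}$ to be the same for every $k$, hence at least $\max_l\alpha^0_{pl}$ (since the $p$-block of $\lambda^p$ must equal that of $\alpha^0$), and one must check that a feasible $\lambda^p$ attaining $K(\lambda^p)=\max_k\alpha^0_{pk}$ exists under the box constraints $0\leq\lambda^p_{ik}\leq\alpha^0_{ik}$. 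Without these two constructions the constant $Q(Q-1)$ is not derived but presupposed; you say yourself that it is ``dictated by the need to match the lower bound.''

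A secondary point: the opening computation (the self-contribution $\alpha^0_{pk^*}\|\Phi(x_p)\|^2$ after recentering) is correct but leads nowhere here, because deleting $(x_p,y_p)$ perturbs every dual variable and both biases, and you give no mechanism for absorbing all of those perturbations into a bound of the form $C_Q{\cal D}_m^2\max_k\alpha^0_{pk}$ on the discriminant increment; that is precisely why the proof must go through the objective gap rather than through $h_{y_p}-h_{k^*}$. Note also that the quadratic terms are not controlled via $\|\Phi(x_i)\|\leq\frac{1}{2}{\cal D}_m$ after recentering, but by writing each vector $\sum_i\sum_l\eta_{il}(\frac{1}{Q}-\delta_{k,l})\Phi(x_i)$ as $K(\eta)$ times a difference of two convex combinations of the $\Phi(x_i)$, whose norm is at most ${\cal D}_m$ with no recentering needed; summing the $Q$ resulting terms is what produces the remaining $Q^2$ in $(Q-1)^2Q^2K(\lambda^p)^2K(\nu^p)^2{\cal D}_m^4\geq 1$, from which the lemma follows with $K(\nu^p)=1$ and $K(\lambda^p)=\max_k\alpha^0_{pk}$.
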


\begin{proof}
Let $\left(\mathbf{w}^p, \mathbf{b}^p\right)$ be the couple
characterizing the optimal hyperplanes when the machine is trained on
$d_m \setminus \left \{ (x_p, y_p) \right \}$. Let
$$
\alpha^p = (\alpha_{11}^p, \ldots, \alpha_{\left (p-1 \right )
Q}^p, 0, \ldots, 0, \alpha_{\left (p+1 \right )1}^p, \ldots,
\alpha_{mQ}^p)^T
$$
be the corresponding vector of dual variables.
$\alpha^p$ belongs to $\mathbb{R}_+^{Qm}$, with 
$\left ( \alpha_{pk}^p \right )_{1 \leq k \leq Q} = 0_Q$.
This representation is used to characterize
directly the second M-SVM with respect to the first one. Indeed,
$\alpha^p$ is an optimal solution of Problem~\ref{problem:dualMC}
under the additional constraint $\left ( \alpha_{pk} \right )_{1 \leq k \leq Q} = 0_Q$.
Let us define two more vectors in $\mathbb{R}_+^{Qm}$,
$\lambda^p = (\lambda_{ik}^p)_{1 \leq i \leq m, 1 \leq k \leq Q}$ and 
$\mu^p = (\mu_{ik}^p)_{1 \leq i \leq m, 1 \leq k \leq Q}$.
$\lambda^p$ satisfies additional properties so that the vector
$\alpha^0 - \lambda^p$ is a feasible solution of
Problem~\ref{problem:dualMC} under the additional constraint that
$\left ( \alpha_{pk}^0 - \lambda_{pk}^p \right )_{1 \leq k \leq Q} = 0_Q$,
i.e., $\alpha^0 - \lambda^p$ satisfies the same
constraints as $\alpha^p$. We have
$$
\forall i \neq p, \forall k \neq y_i, \;\; \alpha_{ik}^0 -
\lambda_{ik}^p \geq 0 \Longleftrightarrow
\lambda_{ik}^p \leq \alpha_{ik}^0.
$$
We deduce from the equality constraints of Problem~\ref{problem:dualMC} that:
$$
\forall k, \;\; \sum_{i=1}^m \sum_{l=1}^Q \left (\alpha_{il}^0 -
\lambda_{il}^p \right )\left( \frac{1}{Q} - \delta_{k,l} \right ) = 0
\Longleftrightarrow
\sum_{i=1}^m \sum_{l=1}^Q \lambda_{il}^p \left( \frac{1}{Q} - \delta_{k,l} \right) = 0.
$$
To sum up, vector $\lambda^p$ satisfies the following constraints:
\begin{equation}
\begin{cases}
\forall k, \;\; \lambda_{pk}^p = \alpha_{pk}^0 \\
\forall i \neq p, \forall k, \;\; 0 \leq \lambda_{ik}^p \leq \alpha_{ik}^0 \\
\sum_{i=1}^m \sum_{l=1}^Q \lambda_{il}^p \left( \frac{1}{Q} - \delta_{k,l} \right) = 0,
\;\; (1 \leq k \leq Q)
\end{cases}.
\label{eq:cstrts3}
\end{equation}
The properties of vector $\mu^p$ are such that $\alpha^p + K_1 \mu^p$ satisfies the
constraints of the same problem, where $K_1$ is a positive scalar the value of which will
be specified in the sequel. We have thus:
$$
\forall i, \;\; \alpha_{iy_i}^p + K_1 \mu_{iy_i}^p=0
\Longleftrightarrow \mu_{iy_i}^p=0.
$$
Moreover, we have
$$
\forall i, \forall k \neq y_i, \;\; \mu_{ik}^p  \geq 0
\Longrightarrow \alpha_{ik}^p + K_1 \mu_{ik}^p \geq 0.
$$
Finally,
$$
\sum_{i=1}^m \sum_{l=1}^Q \left (\alpha_{il}^p + c \mu_{il}^p
\right ) \left( \frac{1}{Q} - \delta_{k,l} \right)  = 0
\Longleftrightarrow
\sum_{i=1}^m \sum_{l=1}^Q\mu_{il}^p \left( \frac{1}{Q} - \delta_{k,l} \right) = 0.
$$
To sum up, vector $\mu^p$ satisfies the following constraints:
\begin{equation}
\label{eq:cstrts4}
\begin{cases}
\forall i, \;\; \mu_{iy_i}^p = 0\\
\forall i, \forall k \neq y_i, \;\; \mu_{ik}^p \geq 0\\
\sum_{i=1}^m \sum_{l=1}^Q\mu_{il}^p \left( \frac{1}{Q} - \delta_{k,l} \right) = 0,
\;\; (1 \leq k \leq Q)\\
\end{cases}.
\end{equation}
In the sequel, for the sake of simplicity, we write $J$ in place of
$J_{\text{LLW,d}}$. By construction of vectors $\lambda^p$ and $\mu^p$, we have
$J(\alpha^0 - \lambda^p) \leq J(\alpha^p)$ and
$J \left ( \alpha^p + K_1 \mu^p \right ) \leq J(\alpha^0)$, and by way of
consequence,
\begin{equation}
\label{eq:lower_upper_boundMC}
J(\alpha^0) -  J(\alpha^0 - \lambda^p) \geq J(\alpha^0) -  J(\alpha^p)
\geq J \left ( \alpha^p + K_1 \mu^p \right ) - J(\alpha^p).
\end{equation}
The expression of the first term is
\begin{equation}
\label{eq:lower_upper_boundMC2}
J(\alpha^0) -  J(\alpha^0 - \lambda^p) = \frac{1}{2} {\lambda^p}^T H \lambda^p +
\left ( - H \alpha^0 + \frac{1}{Q-1} 1_{Qm} \right )^T \lambda^p.
\end{equation}
Given (\ref{eq:w_k}) and the definition of matrix $H$,
$$
\left ( - H \alpha^0 + \frac{1}{Q-1}1_{Qm} \right )^T \lambda^p =
\sum_{i=1}^m \sum_{k \neq y_i} \left ( \ps{w_k^0, \Phi (x_i)} + \frac{1}{Q-1} \right ) 
\lambda_{ik}^p
$$
\begin{equation}
\label{simplifi}
=
\sum_{i=1}^m \sum_{k \neq y_i} \left ( h_k^0 \left ( x_i \right ) + \frac{1}{Q-1} \right )
\lambda_{ik}^p - \sum_{i=1}^m \sum_{k \neq y_i} b_k^0 \lambda_{ik}^p.
\end{equation}
Due to the constraints of correct classification and the nonnegativity of the components
of vector $\lambda^p$, the first double sum
of the right-hand side of (\ref{simplifi}) is nonpositive.
Furthermore, making use of the equality constraints of (\ref{eq:cstrts3})
and $\sum_{k = 1}^Q b_k^0=0$ gives:
$$
\sum_{i=1}^m \sum_{k=1}^Q b_k^0 \lambda_{ik}^p =
\sum_{k=1}^Q b_k^0 \sum_{i=1}^m \lambda_{ik}^p =
\left ( \sum_{k=1}^Q b_k^0 \right) \left(\sum_{i=1}^m \sum_{l = 1}^Q \frac{1}{Q}
\lambda_{il}^p \right) = 0.
$$
Thus,
$$
\left ( - H \alpha^0 + \frac{1}{Q-1} 1_{Qm} \right )^T \lambda^p \leq 0.
$$
A substitution into (\ref{eq:lower_upper_boundMC2}) provides us with
the following upper bound on $J(\alpha^0) - J(\alpha^0 - \lambda^p)$:
$$
J(\alpha^0) - J(\alpha^0 - \lambda^p) \leq \frac{1}{2} {\lambda^p}^T H \lambda^p,
$$
and equivalently, by definition of $H$,
\begin{equation}
\label{eq:left_hand_sideMC}
J(\alpha^0) - J(\alpha^0 - \lambda^p) \leq
\frac{1}{2} \sum_{k=1}^Q \left \|
\sum_{i=1}^m \sum_{l=1}^Q \lambda_{il}^p \left( \frac{1}{Q} - \delta_{k,l} \right)
\Phi(x_i) \right \|^2.
\end{equation}
We now turn to the right-hand side of
(\ref{eq:lower_upper_boundMC}). The line of reasoning already used
for the left-hand side gives:
$$
J \left ( \alpha^p + K_1 \mu^p \right ) - J(\alpha^p) =
$$
\begin{equation}
\label{bleu}
K_1 \left ( - H \alpha^p + \frac{1}{Q-1}1_{Qm} \right )^T \mu^p -
\frac{K_1^2}{2} \sum_{k=1}^Q \left \| \sum_{i=1}^m \sum_{l=1}^Q
\mu_{il}^p \left( \frac{1}{Q} - \delta_{k,l} \right)  \Phi(x_i) \right \|^2
\end{equation}
with
\begin{equation}
\label{eq:part01}
\left ( - H \alpha^p + \frac{1}{Q-1}1_{Qm} \right )^T \mu^p =
\sum_{i=1}^m \sum_{k \neq y_i} \left ( h_k^p \left ( x_i \right ) + \frac{1}{Q-1} \right )
\mu_{ik}^p.
\end{equation}
By hypothesis, the M-SVM trained on $d_m \setminus \left \{ (x_p, y_p) \right \}$
does not classify $x_p$ correctly. This means that there exists 
$n \in \ieg 1, Q \ied \setminus \left \{ y_p \right \}$ such that
$h_n^p \left ( x_p \right ) \geq 0$. Let ${\cal I}$ be a mapping from
$\ieg 1, Q \ied \setminus \left \{ n \right \}$ to
$\ieg 1, m \ied \setminus \left \{ p \right \}$ such that 
$$
\forall k \in \ieg 1, Q \ied \setminus \left \{ n \right \}, \;\;
\alpha_{{\cal I}(k)n}^p > 0. 
$$
We know that such a mapping exists, otherwise,
given the equality constraints of Problem~\ref{problem:dualMC},
vector $\alpha^p$ would be equal to the null vector.
For $K_2 \in \mathbb{R}_+^*$,
let $\mu^p$ be the vector of $\mathbb{R}^{Qm}$ that only differs from the null
vector in the following way:
$$
\begin{cases}
\mu_{pn}^p = K_2 \\
\forall k \in \ieg 1, Q \ied \setminus \left \{ n \right \}, \;\; 
\mu_{{\cal I}(k)k}^p = K_2
\end{cases}.
$$
Obviously, this solution is feasible (satisfies the constraints~\ref{eq:cstrts4}). 
Indeed, $\frac{1}{Q} \sum_{i=1}^m \sum_{k=1}^Q \mu_{ik}^p = K_2$ and
$\sum_{i=1}^m \mu_{ik}^p = K_2$, $\left ( 1 \leq k \leq Q \right )$.
With this definition of
vector $\mu^p$, the right-hand side of (\ref{eq:part01}) simplifies into:
$$
K_2 \left ( h_n^p \left ( x_p \right ) + 
\sum_{k \neq n} h_k^p \left ( x_{{\cal I}(k)} \right ) + \frac{Q}{Q-1} \right ).
$$
Vector $\mu^p$ has been specified so as to make it possible to exhibit a nontrivial
lower bound on this last expression. By definition of $n$, 
$h_n^p \left ( x_p \right ) \geq 0$. Furthermore, the Kuhn-Tucker optimality conditions:
$$
\alpha_{ik}^p \left ( \ps{w_k^p , \Phi (x_i)} + b_k^p + \frac{1}{Q-1} \right ) = 0,
\;\; (1 \leq i \neq p \leq m), (1 \leq k \neq y_i \leq Q)
$$
imply that $\left ( h_k^p \left ( x_{{\cal I}(k)} \right ) 
\right )_{1 \leq k \neq n \leq Q} = - \frac{1}{Q-1} 1_{Q-1}$.
As a consequence, a lower bound on the right-hand side of (\ref{eq:part01})
is provided by:
$$
\sum_{i=1}^m \sum_{k \neq y_i} \left ( h_k^p \left ( x_i \right ) + \frac{1}{Q-1} \right )
\mu_{ik}^p \geq
\frac{K_2}{Q-1}.
$$
It springs from this bound and (\ref{bleu}) that
\begin{equation}
\label{eq:right_hand_sideMC}
J \left ( \alpha^p + K_1 \mu^p \right ) - J(\alpha^p) \geq \frac{K_1 K_2}{Q-1} -
\frac{K_1^2}{2} \sum_{k=1}^Q \left \| \sum_{i=1}^m \sum_{l=1}^Q
\mu_{il}^p \left( \frac{1}{Q} - \delta_{k,l} \right)  \Phi(x_i) \right \|^2.
\end{equation}

Combining (\ref{eq:lower_upper_boundMC}),
(\ref{eq:left_hand_sideMC}) and (\ref{eq:right_hand_sideMC})
finally gives:
$$
\frac{1}{2} \sum_{k=1}^Q \left \|
\sum_{i=1}^m \sum_{l=1}^Q \lambda_{il}^p \left( \frac{1}{Q} - \delta_{k,l} \right)
\Phi(x_i) \right \|^2 \geq
$$
\begin{equation}
\label{eq:gap_cMC}
\frac{K_1 K_2}{Q-1} - \frac{K_1^2}{2} \sum_{k=1}^Q \left \| \sum_{i=1}^m \sum_{l=1}^Q
\mu_{il}^p \left( \frac{1}{Q} - \delta_{k,l} \right)  \Phi(x_i) \right \|^2.
\end{equation}
Let $\nu^p = (\nu_{ik}^p)_{1 \leq i \leq m, 1 \leq k \leq Q}$ be the vector of
$\mathbb{R}_+^{Qm}$ such that $\mu^p = K_2 \nu^p$.
The value of the scalar $K_3 = K_1 K_2$ maximizing the right-hand side of
(\ref{eq:gap_cMC}) is:
$$
K_3^* = \frac{ \frac{1}{Q-1}}{
\sum_{k=1}^Q \left \| \sum_{i=1}^m \sum_{l=1}^Q
\nu_{il}^p \left( \frac{1}{Q} - \delta_{k,l} \right)  \Phi(x_i) \right \|^2
}.
$$
By substitution in (\ref{eq:gap_cMC}), this means that:
$$
\label{eq:alpha_lower_boundMC}
(Q-1)^2 \sum_{k=1}^Q \left \|
\sum_{i=1}^m \sum_{l=1}^Q \lambda_{il}^p \left( \frac{1}{Q} - \delta_{k,l} \right)
\Phi(x_i) \right \|^2
\sum_{k=1}^Q \left \| \sum_{i=1}^m \sum_{l=1}^Q
\nu_{il}^p \left( \frac{1}{Q} - \delta_{k,l} \right)  \Phi(x_i) \right \|^2 \geq 1.
$$
For $\eta$ in $\mathbb{R}^{Qm}$,
let $K(\eta) = \frac{1}{Q} \sum_{i=1}^m \sum_{k=1}^Q \eta_{ik}^p$.
We have:
$$
\left \| \frac{1}{Q} \sum_{i=1}^m \sum_{l=1}^Q \lambda_{il}^p \Phi(x_i) -
\sum_{i=1}^m \lambda_{ik}^p \Phi(x_i)\right \| ^2 = 
K \left ( \lambda^p \right )^2
\left \| \text{conv}_1 (\Phi(x_i)) - \text{conv}_2 (\Phi(x_i)) \right \| ^2
$$
where $\text{conv}_1 (\Phi(x_i))$ and $\text{conv}_2 (\Phi(x_i))$ are two convex
combinations of the $\Phi(x_i)$. As a consequence,
$\left \| \text{conv}_1 (\Phi(x_i)) - \text{conv}_2 (\Phi(x_i)) \right \| ^2$
can be bounded from above by ${\cal D}_m^2$. Since the same reasoning applies to $\nu^p$,
we get:
\begin{equation}
\label{eq:with_K}
(Q-1)^2 Q^2 K \left ( \lambda^p \right )^2 K \left ( \nu^p \right )^2 {\cal D}_m^4
\geq 1.
\end{equation}
By construction, $K \left ( \nu^p \right ) = 1$. We now construct a vector
$\lambda^p$ minimizing the objective function $K$.
First, note that due to the equality constraints satisfied by this vector,
$$
\forall k \in \ieg 1, Q \ied, \;\; \sum_{i=1}^m \lambda_{ik}^p =
\frac{1}{Q} \sum_{i=1}^m \sum_{l=1}^Q \lambda_{il}^p.
$$
As a consequence,
$$
\forall (k,l) \in \ieg 1, Q \ied^2, \;\; \sum_{i=1}^m \lambda_{ik}^p =
\sum_{i=1}^m \lambda_{il}^p.
$$
This implies that:
$$
\forall k \in \ieg 1, Q \ied, \;\; \sum_{i=1}^m \lambda_{ik}^p \geq
\max_{l \in \ieg 1, Q \ied} \alpha_{pl}^0.
$$
Obviously, both the box constraints in (\ref{eq:cstrts3}) and the nature of $K$
call for the choice of small values for the components $\lambda_{ik}^p$.
Thus, there is a feasible solution ${\lambda^p}^*$ such that:
$$
\forall k \in \ieg 1, Q \ied, \;\; \sum_{i=1}^m {\lambda_{ik}^p}^* = 
\max_{l \in \ieg 1, Q \ied} \alpha_{pl}^0.
$$
This solution is such that $K \left ( {\lambda^p}^* \right ) = 
\max_{k \in \ieg 1, Q \ied} \alpha_{pk}^0$.
The substitution of the values of $K \left ( \nu^p \right )$ and 
$K \left ( {\lambda^p}^* \right )$ in (\ref{eq:with_K}) provides us with:
$$
\left ( \max_{k \in \ieg 1, Q \ied} \alpha_{pk}^0 \right )^2 \geq
\frac{1}{(Q-1)^2Q^2 {\cal D}_m^4}.
$$
Taking the square root of both sides concludes the proof of the lemma.
\end{proof}

\subsection{Multi-class radius-margin bound}

\begin{theorem}[Multi-class radius-margin bound]
\label{theorem:MCccl}
Let us consider a $Q$-category hard margin M-SVM of Lee, Lin and Wahba
on a domain ${\cal X}$. Let
$d_m = \left\{(x_i, y_i): 1\leq i\leq m  \right\}$
be its training set,
$\mathcal{L}_m$ the number of errors resulting from
applying a leave-one-out cross-validation
procedure to this machine, and ${\cal D}_m$
the diameter of the smallest sphere
of the feature space containing the set $\left\{\Phi(x_i): 1\leq i\leq m  \right\}$.
Then the following upper bound holds true:
$$
\mathcal{L}_m \leq Q^2 {\cal D}_m^2 \sum_{k<l}
\left(\frac{1+ d_{\textrm{LLW},kl}}{\gamma_{kl}}\right)^2.
$$
\end{theorem}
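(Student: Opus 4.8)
The plan is to mirror Vapnik's bi-class argument (Theorem~\ref{theorem:ccl}) almost verbatim, substituting its two ingredients by their multi-class counterparts: the multi-class key lemma (Lemma~\ref{lemma:keyMC}) replaces the bi-class key lemma, and Proposition~\ref{prop:primal_dual} replaces the bi-class identity that equates the sum of the Lagrange multipliers with the inverse squared margin. Since Lemma~\ref{lemma:keyMC} already carries the analytic weight of the proof, what is left is an assembling argument: convert a per-error lower bound on the dual variables into a lower bound on $1_{Qm}^T \alpha^0$, then read off the margin expression from Proposition~\ref{prop:primal_dual}.

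First I would let $E \subseteq \ieg 1, m \ied$ be the set of indices $p$ for which the leave-one-out procedure misclassifies $(x_p, y_p)$, so that $\mathcal{L}_m = |E|$. Applying Lemma~\ref{lemma:keyMC} to every $p \in E$ and summing the resulting inequalities gives
$$
\frac{\mathcal{L}_m}{Q(Q-1){\cal D}_m^2} \leq \sum_{p \in E} \max_{1 \leq k \leq Q} \alpha_{pk}^0 .
$$
Because all the dual variables $\alpha_{pk}^0$ are nonnegative, I would then bound each maximum by the full row sum, $\max_{1 \leq k \leq Q} \alpha_{pk}^0 \leq \sum_{k=1}^Q \alpha_{pk}^0$, and enlarge the index set from $E$ to $\ieg 1, m \ied$ (again legitimate by nonnegativity), which yields
$$
\sum_{p \in E} \max_{1 \leq k \leq Q} \alpha_{pk}^0 \leq \sum_{i=1}^m \sum_{k=1}^Q \alpha_{ik}^0 = 1_{Qm}^T \alpha^0 .
$$

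It then remains to translate $1_{Qm}^T \alpha^0$ into the geometrical margins. Proposition~\ref{prop:primal_dual} delivers exactly this, since multiplying its identity $\frac{1}{Q-1} 1_{Qm}^T \alpha^0 = \frac{Q}{(Q-1)^2} \sum_{k<l} \left( \frac{1 + d_{\text{LLW},kl}}{\gamma_{kl}} \right)^2$ by $(Q-1)$ gives
$$
1_{Qm}^T \alpha^0 = \frac{Q}{Q-1} \sum_{k<l} \left( \frac{1 + d_{\text{LLW},kl}}{\gamma_{kl}} \right)^2 .
$$
Chaining the three displays and multiplying through by $Q(Q-1){\cal D}_m^2$ produces
$$
\mathcal{L}_m \leq Q(Q-1){\cal D}_m^2 \cdot \frac{Q}{Q-1} \sum_{k<l} \left( \frac{1 + d_{\text{LLW},kl}}{\gamma_{kl}} \right)^2 = Q^2 {\cal D}_m^2 \sum_{k<l} \left( \frac{1 + d_{\text{LLW},kl}}{\gamma_{kl}} \right)^2 ,
$$
using the cancellation $Q(Q-1)\cdot\frac{Q}{Q-1}=Q^2$, which is precisely the claimed bound.

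At this level the argument is essentially bookkeeping, so the genuine difficulty lies entirely upstream in Lemma~\ref{lemma:keyMC}. The only step in the theorem itself that is not purely mechanical is the passage $\max_{1 \leq k \leq Q}\alpha_{pk}^0 \leq \sum_{k=1}^Q \alpha_{pk}^0$: this is where one might fear losing a factor depending on the number of classes, but since it is used only in the direction that loosens the inequality and the terminal constant $Q^2$ is exactly what Proposition~\ref{prop:primal_dual} supplies after cancellation, no sharper treatment of that step is needed to reach the stated result.
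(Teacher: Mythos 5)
Your proposal is correct and follows essentially the same route as the paper: apply Lemma~\ref{lemma:keyMC} to each leave-one-out error, bound the sum of the row maxima by $1_{Qm}^T \alpha^0$ using nonnegativity of the dual variables, and substitute the margin identity from Proposition~\ref{prop:primal_dual}. The paper compresses your steps into a single two-sided chain of inequalities, but the content and the final cancellation $Q(Q-1)\cdot\frac{Q}{Q-1}=Q^2$ are identical.
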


\begin{proof}
Lemma~\ref{lemma:keyMC} exhibits a non trivial lower bound on
$\max_{k \in \ieg 1, Q \ied} \alpha_{pk}^0$ 
when the machine trained on the set $d_m \setminus \left \{ (x_p, y_p) \right \}$
makes an error on $(x_p, y_p)$, i.e., when $(x_p, y_p)$ contributes to
$\mathcal{L}_m$. As a consequence,
\begin{equation}
\label{eq:last_link}
1_{Qm}^T \alpha^0 \geq
\sum_{i=1}^m \max_{k \in \ieg 1, Q \ied} \alpha_{ik}^0 \geq
\frac{\mathcal{L}_m}{Q(Q-1) {\cal D}_m^2}.
\end{equation}
According to Proposition~\ref{prop:primal_dual},
$1_{Qm}^T \alpha^0 = \frac{Q}{Q-1} \sum_{k<l}
\left ( \frac{1 + d_{\text{LLW}, kl}}{\gamma_{kl}} \right )^2$.
A substitution in (\ref{eq:last_link}) thus provides us with the result announced.
\end{proof}

\newpage

\section{Conclusions and Future Work}
\label{sec:conclusion}
In this report, we have introduced a variant of the M-SVM
of Lee, Lin and Wahba that strictly generalizes to the multi-class
case the $2$-norm SVM. 
For this quadratic loss M-SVM, named $\text{M-SVM}^2$,
we have then established 
a generalization of Vapnik's radius-margin bound.
We conjecture that this bound could be improved by a $Q^2$ factor.
As it is, it can already be compared with those
proposed in \cite{WanXueCha05} for model selection.
This, with a general study of the quadratic loss M-SVMs,
is the subject of an ongoing research.

\subsection*{Acknowledgements}
The work of E.~Monfrini was supported by the Decrypthon program
of the ``Association Française contre les Myopathies'' (AFM),
the CNRS and IBM.

\newpage

\tableofcontents
\newpage
\bibliography{App}

\begin{thebibliography}{10}

\bibitem{BerTho04}
A.~Berlinet and C.~Thomas-Agnan.
\newblock {\em Reproducing Kernel Hilbert Spaces in Probability and
  Statistics}.
\newblock Kluwer Academic Publishers, Boston, 2004.

\bibitem{BosGuyVap92}
B.E. Boser, I.M. Guyon, and V.N. Vapnik.
\newblock A training algorithm for optimal margin classifiers.
\newblock In {\em COLT'92}, pages 144--152, 1992.

\bibitem{ChaVapBouMuk02}
O.~Chapelle, V.N. Vapnik, O.~Bousquet, and S.~Mukherjee.
\newblock Choosing multiple parameters for support vector machines.
\newblock {\em Machine Learning}, 46(1):131--159, 2002.

\bibitem{CorVap95}
C.~Cortes and V.N. Vapnik.
\newblock Support-vector networks.
\newblock {\em Machine Learning}, 20(3):273--297, 1995.

\bibitem{CraSin01}
K.~Crammer and Y.~Singer.
\newblock On the algorithmic implementation of multiclass kernel-based vector
  machines.
\newblock {\em Journal of Machine Learning Research}, 2:265--292, 2001.

\bibitem{Fle87}
R.~Fletcher.
\newblock {\em Practical Methods of Optimization}.
\newblock John Wiley \& Sons, Chichester, second edition, 1987.

\bibitem{Gue07a}
Y.~Guermeur.
\newblock {\em SVM multiclasses, théorie et applications}.
\newblock Habilitation à diriger des recherches, UHP, 2007.
\newblock (in French).

\bibitem{Gue07b}
Y.~Guermeur.
\newblock {VC} theory of large margin multi-category classifiers.
\newblock {\em Journal of Machine Learning Research}, 8:2551--2594, 2007.

\bibitem{HasTibFri01}
T.~Hastie, R.~Tibshirani, and J.~Friedman.
\newblock {\em The Elements of Statistical Learning - Data Mining, Inference,
  and Prediction}.
\newblock Springer, New York, 2001.

\bibitem{LeeLinWah04}
Y.~Lee, Y.~Lin, and G.~Wahba.
\newblock Multicategory support vector machines: Theory and application to the
  classification of microarray data and satellite radiance data.
\newblock {\em Journal of the American Statistical Association},
  99(465):67--81, 2004.

\bibitem{LunBra69}
A.~Luntz and V.~Brailovsky.
\newblock On estimation of characters obtained in statistical procedure of
  recognition.
\newblock {\em Technicheskaya Kibernetica}, 3, 1969.
\newblock (in Russian).

\bibitem{Mas03}
P.~Massart.
\newblock Concentrations inequalities and model selection.
\newblock In {\em Ecole d'Eté de Probabilités de Saint-Flour~XXXIII}, LNM.
  Springer-Verlag, 2003.

\bibitem{ShaCri04}
J.~Shawe-Taylor and N.~Cristianini.
\newblock {\em Kernel Methods for Pattern Analysis}.
\newblock Cambridge University Press, Cambridge, 2004.

\bibitem{TewBar07}
A.~Tewari and P.L. Bartlett.
\newblock On the consistency of multiclass classification methods.
\newblock {\em Journal of Machine Learning Research}, 8:1007--1025, 2007.

\bibitem{Vap98}
V.N. Vapnik.
\newblock {\em Statistical Learning Theory}.
\newblock John Wiley \& Sons, Inc., New York, 1998.

\bibitem{VapCha00}
V.N. Vapnik and O.~Chapelle.
\newblock Bounds on error expectation for support vector machines.
\newblock {\em Neural Computation}, 12(9):2013--2036, 2000.

\bibitem{Wah99}
G.~Wahba.
\newblock Support vector machines, reproducing kernel {Hilbert} spaces, and
  randomized {GACV}.
\newblock In B.~Sch\"olkopf, C.J.C. Burges, and A.J. Smola, editors, {\em
  Advances in Kernel Methods, Support Vector Learning}, chapter~6, pages
  69--88. The MIT Press, Cambridge, MA, 1999.

\bibitem{WanXueCha05}
L.~Wang, P.~Xue, and K.L. Chan.
\newblock Generalized radius-margin bounds for model selection in multi-class
  {SVMs}.
\newblock Technical report, School of Electrical and Electronic Engineering,
  Nanyang Technological University, Singapore, 639798, 2005.

\bibitem{WesWat98}
J.~Weston and C.~Watkins.
\newblock Multi-class support vector machines.
\newblock Technical Report {CSD-TR-98-04}, {Royal Holloway, University of
  London, Department of Computer Science}, 1998.

\bibitem{Zha04}
T.~Zhang.
\newblock Statistical analysis of some multi-category large margin
  classification methods.
\newblock {\em Journal of Machine Learning Research}, 5:1225--1251, 2004.

\end{thebibliography}
\bibliographystyle{plain}

\end{document}